\documentclass[10pt,reqno]{amsart}
\usepackage{graphicx}
\usepackage{indentfirst,csquotes}

\usepackage{hyperref}
\usepackage{url}
\usepackage{natbib}

\usepackage{parskip}

\usepackage{subcaption}
\usepackage{booktabs}

\usepackage{amssymb, amsthm, amsmath}

\newtheorem{theorem}{Theorem}

\newtheorem{definition}{Definition}

\newtheorem{lemma}{Lemma}
\newtheorem{assumption}{Assumption}

\hypersetup{ colorlinks=true, linkcolor=black, filecolor=black, urlcolor=black }

\usepackage{lipsum}

\begin{document}
\title[EAGLE: Entropy-Aware Object Localization]{EAGLE: Entropy-Aware Geometric Learning Enhancer for Partially Polygon-Supervised Object Localization} %%%%%%%%%%%%
\author[N Stewart]{Naveen Stewart}
\date{\today}
\address{}
\email{}
\maketitle

\begin{abstract}
Object detection is a critical application of computer vision, yet models trained uniformly on bounding boxes lack the geometric precision to be fully effective in inference settings, especially at high IoU thresholds. One common approach is the introduction of polygon-supervised localization, where a model is trained explicitly for localization but with access to polygon masks during training. Among such methods, partial polygon supervision is viable due to the higher annotation costs of developing fully supervised training datasets. However, this introduces a further challenge: in object classes of relatively low shape complexity, the addition of training masks and subsequent segmentation loss may obscure box localization, often worsening model performance despite access to deeper geometric information. To address the practical limitations of both full and partial polygon supervision, we introduce a mathematical framework for quantifying per-class geometric complexity by combining information theory and statistical shape analysis. With this framework, we develop Entropy-Aware Geometric Learning Enhancer (EAGLE), a lightweight, post-training model head that learns geometric information for the localization model, improving accuracy via edge refinement and localization rescoring using entropy-informed updates. Empirical analysis across MS COCO and Pascal SBD demonstrates the entropy framework meaningfully predicts shape reconstruction error and accuracy differences among mask-derived and standard box predictions, while the implemented EAGLE framework achieves consistent improvement on already SOTA detection frameworks (+0.9 mAP50-95 points on MS COCO) at minimal observed inference overhead.
\end{abstract}

\section{Introduction}

Computer vision tasks are increasingly demanding data-efficient methods for model training and deployment, especially with the transition towards widespread implementation \citep{khatib2026,zhang2026}. As a primary setup for localization, object detection has remained a predominant method for real-time industrial applications, and relies on bounding box annotations in standard supervised methods. In many instances, 2D object detection serves as the first step in a multi-stage model setup, often serving as the initial localization step and regional proposals to Vision Language Models (VLMs) \citep{li2022}, object tracking \citep{zhang2022}, robotics systems \citep{wang2026}, and as an initial pass for 3D computer vision applications.

Whereas typical object detection relies solely on bounding boxes, more recent methods have attempted to replicate the higher precision of instance segmentation via polygon supervision. Many detectors use polygon supervision during training and later discard the polygon head at inference; such methods give a model access to precise geometric information that boxes alone lack. While promising for avoiding the substantial overhead of instance segmentation, such training is heavily resource intensive: mask annotations require significantly more manual effort, often increasing annotation time by a near tenfold factor \citep{lin2015,bellver2019,cordts2016}. This unrealistic cost often necessitates partial polygon supervision, where all object instances are annotated with bounding boxes and only a portion are annotated with polygon masks.

However, for some classes, joint mask learning can interfere with box localization despite providing richer geometric information. Such drawbacks are common in low-complexity objects, where understanding precise features provides minimal localization gain and instead distracts the model. Methods for quantifying object shape complexity are further incomplete, as existing heuristics can not reliably predict complexity when exposed to masks, often using manual inspection and intuition rather than quantifiable methods.

To address these gaps, we introduce Entropy-Aware Geometric Learning Enhancer (EAGLE), a lightweight, post-training refiner that combines entropy-conditioned geometric proposals alongside proposal selection and postprocessing to refine object detection using learned geometric features without compromising native model performance. To provide the theoretical backing for the architecture, we build a mathematical framework combining information theory and statistical shape analysis to quantify shape uncertainty, and derive an upper bound on expected box IoU based on post-inference shape ambiguity. This framework motivates EAGLE's entropy-conditioned geometric refinement and training. Our main contributions are as follows:
\begin{itemize}
    \item We develop a mathematical framework for quantifying shape uncertainty in a partial polygon dataset. Theorems include an upper bound on continuous shape entropy $h(S | A_{box})$, or the uncertainty about object shape given only bounding box annotations, an explicit form for discrete entropy $H(S | A_{box})$, and an upper bound on model Intersection over Union (IoU) performance that is strictly decreasing with respect to the remaining shape uncertainty post-inference.
    \item The heuristic $H(S | A_{box})$ is validated on benchmark datasets as a predictor of shape reconstruction error and the relative gain of mask-derived box $\text{mAP}_{50-95}$ against the detection head. We show high correlation between the two factors, emphasizing the heuristic as a valuable form for measuring object uncertainty.
    \item We evaluate EAGLE across several benchmarking datasets and model frameworks, showcasing substantial $\text{mAP}_{50-95}$ gains at minimal inference cost. The head can be trained with a budget of polygon instances, rather than full polygon supervision, and we demonstrate effectiveness with only 40\% of polygon annotated objects.
\end{itemize}

\section{Related Work}

\textbf{Geometric Supervised Object Localization.} While early frameworks of instance segmentation \citep{hariharan2014,dai2016,he2018} relied heavily on full polygon supervision to retain geometric features and enhance mask predictions, recent work on semi-supervised annotations \citep{hu2023,zhou2020,berrada2024}, weak supervision \citep{tian2021,lan2021,cheng2023}, and partial supervision \citep{hu2018,kuo2019,wang2022} when applied to standard computer vision methods has demonstrated increasingly capable model architectures with lower annotation costs. Many frameworks have developed methods for utilizing various types of geometric information to enhance feature recognition for localization of the object class, given a variety of types of training data of varying annotation type \citep{chen2019,kirillov2020,lu2018,zhang2018}.

Notably, work from \citet{bhowmik2025} identifies that overlapping bounding box inferences often present partial geometric information about the underlying object, and aggregation of such inferences can significantly improve localization accuracy. While this approach prioritizes the geometric information from localization, our method targets polygon representations of the underlying object and trains models to interpret geometric features from partially supervised training data.

\textbf{Polygon and Mask Supervised Object Detection.} Various works have aimed to use polygon supervision to advance object detection, noting that standard multi-task frameworks present challenges based on different learning priorities between detection and segmentation \citep{jaegar2020,cao2019,brahmbhatt2017}. Notable examples of geometric feature interference include Mask R-CNN \citep{he2018}, where a joint training setup results in exact precision features (fine edges and contours) distracting the model from the spatial features more valuable to coarse bounding box alignment. Further frameworks such as You-Only-Look-Once (YOLO) \citep{jocher2026} reports lower box AP among instance segmentation models relative to identically trained object detection, implying that polygon supervision can have drawbacks despite higher geometric information.

To address these challenges, work from \citet{sun2026} approaches object detection via polygon-supervised localization guided by regression from polar representation. While the technique effectively improves accuracy, it substantially increases model inference time and is bottlenecked by the initial prediction of object center. In our approach, we use a more direct non-Euclidean based approach for box refinement rather than polar regression, aiming to avoid inference cost while still learning geometric features of observed objects. The method of weakly supervised instance segmentation from \citet{kweon2025} combines various classes of weak labels as a direct replacement for polygon masks, finding that some geometric information can be preserved without any mask annotations. Our approach shares the same underlying goal of reducing dataset annotation costs, but does so via an annotation basis of bounding boxes, then partially annotating the dataset with masks.

Note that while traditional polygon supervision in 2D object detection focuses on standard Euclidean geometry-derived interpretations of model features, our work represents a progression towards geometric deep learning \citep{li2025,feydy2020,bronstein2021,veljkovic2026,cao2020,hoang2025} and its applications to computer vision.

\textbf{Information Theory Applied to Deep Learning.} Information theory \citep{shannon1948, cover2006} is deeply implemented in deep learning frameworks \citep{tishby2015,gabrie2018,michael2018,jeon2022,peng2025}. Existing methods frequently classify entropy and mutual information with respect to inference channels and between layers in CNNs, MLPs, and sub-layers of transformers. While rate-distortion theory has been applied in part to annotations \citep{xu2021}, this is typically understood from a broader perspective of input information, without considering annotation type or in respect to a particular observed dataset. We resolve this via the application of statistical shape analysis integrated with information theory to derive entropy formulas.

\textbf{Statistical Shape Analysis.} Many foundational concepts in statistical shape analysis \citep{dryden2016, small1996} overlap with developments in machine learning, particularly methods for identifying dataset-level trends for physical objects in computer vision. Applying Kendall shape spaces developed in \citet{kendall1984} as part of Riemannian geometry often becomes a powerful tool for applications in VAEs \citep{vadgama2022}, probabilistic segmentation \citep{park2024}, and preserving data privacy \citep{soto2022,he2026}. In computer vision especially, Kendall shape spaces have been used at times for embedding images to isolate intrinsic shape. However, similar methods do not yet exist for annotation-level embedding, where bounding boxes and polygon representations do not have such frameworks. To address this gap, we formalize a mathematical framework for annotation-level embeddings for 2D localization frameworks.

\section{Preliminaries}

To begin the problem setup, let $S$ represent the variable of object shape for a dataset, and denote $h(S | A_{box})$ and $H(S | A_{box})$ as the per-instance bits of uncertainty on object shape, for differential and discrete entropy, respectively. The bounding box annotations across the dataset collectively form $A_{box}$. Write dataset $D_{box} = \{(S_1, A_1), ... , (S_N, A_N)\}$ where $(S_i, A_i)$ describes observed object $i$, where the dataset contains $N$ total object instances. Note that $D$ describes the instances of a single object class, and a set of images can be written as the union of datasets of individual object classes.

Polygon annotations are represented collectively as $A_{mask}$.

\begin{assumption}
    $H(S | A_{mask}) \approx 0$. For sufficient polygon vertices, a mask accurately outlines the true object shape, implying a sufficient representation of shape. Although manual annotations may be imperfectly drawn, masks are high precision representations of object shape, and as such are justifiably assumed to have approximately no shape entropy.
\end{assumption}

For all mask instances of the dataset, write $D_{mask} = \{(S_1, M_1), .., (S_N, M_N)\}$ corresponding to the same indices $i$ of $D_{box}$. Resample such that $M_i = \{(x_1, y_1), ... (x_K, y_K)\} : x_i, y_i \in \mathbb{R}^2$ for $K$ vertices. Methods such as arc-length parameterization or curvature-adaptive resampling are preferable and retain shape representation precisely.

\section{Conditional Shape Entropy}

We begin with a definition regarding Kendall shape spaces.

\begin{definition}
    Define $\Sigma_2^K = S^{2K-3} / SO(2)$ as the planar Kendall shape space of $K$ landmarks in 2-axes. $SO(2)$ denotes the special orthogonal group in 2-axes, and $S^{2K - 3}$ denotes the pre-shape sphere of $K$ landmarks. $S_i \in \Sigma_2^K$ for all $S_i$ from the training dataset. All elements of $\Sigma_2^K$ are invariant to translation, scale and rotation.
\end{definition}

This definition allows for the conversion of polygon masks to intrinsic shape objects. The exponent $2K - 3$ is the standard exponent for $K$ landmarks \citep{kendall1984}, derived from $2K$ coordinates, subtracting 2 for translation invariance and 1 for scale.

\begin{definition}
    Let $S_1, S_2$ be equivalence classes in $\Sigma_m^K$. True geodesic distance $d_G(S_1, S_2)$ defined with respect to shapes $S_1, S_2$ is provided by:
    \begin{equation}d_G (S_1, S_2) = \arccos \left( \max_{R \in SO(m)} \langle s_1, Rs_2 \rangle \right): s_1 \in S_1, s_2 \in S_2\end{equation}
    where vectors $s_1, s_2$ are chosen arbitrarily from their respective equivalence classes.
\end{definition}

\begin{definition}
    The Frechet Mean of shapes $S_1, ..., S_N \in \Sigma_2^K$ is given by $\mu = \arg \min_{S \in \Sigma_2^K} \sum_{k=1}^N d^2_G (S, S_k)$.
\end{definition}

We now wish to construct a single matrix of observed shapes, call this $V$, to compare relative shape similarities for estimating $S$ without access to $D_{mask}$. We do so by converting from observed shapes $\{S_i : S_i \in \Sigma_2^K\}_{i=1}^N$ and Frechet Mean $\mu$, under the following lemma.

\begin{lemma}
    Let $Log_\mu (S_i) = \frac{\theta_i}{\sin \theta_i} (R_iS_i - \mu \cos \theta_i)$ where $\theta_i = d_G (\mu, S_i)$ and $R_i = \arg \max_{X \in SO(2)} \langle S_i, X \mu \rangle$. Denote $T_\mu \Sigma_2^K = \{\gamma' (0) : \gamma (0) = \mu , \gamma : [-\epsilon, \epsilon] \to \Sigma_2^K\}$ as the Euclidean tangent space centered at $\mu$. The logarithmic mapping $Log_\mu : \Sigma_2^K \to T_\mu \Sigma_2^K$ preserves shape identity, true geodesic distance from the mean and locally approximates geometric relations.
\end{lemma}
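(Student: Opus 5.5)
We work on the pre-shape sphere $S^{2K-3}$: each $S_i$ is represented by a centered, unit-norm pre-shape $s_i$, and $\mu$ by a pre-shape $m$. Shape distance is then the spherical distance after optimal rotational alignment. The plan is to reduce the three claims to standard facts about the spherical exponential and logarithm maps, in three steps.

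\textbf{Step 1: well-posedness and the tangent space.} We first check that $Log_\mu(S_i)$ is well defined. Since $SO(2)$ is compact and $X \mapsto \langle s_i, X m\rangle$ is continuous, the maximizer $R_i$ exists. Writing pre-shapes as vectors $z\in\mathbb{C}^K$, we get the explicit form $\langle s_i, e^{i\phi} m\rangle = \mathrm{Re}(e^{-i\phi}\langle m, s_i\rangle_{\mathbb{C}})$. This is maximized at $\phi = \arg\langle m,s_i\rangle_{\mathbb{C}}$, so $R_i$ is unique whenever $\langle m,s_i\rangle_{\mathbb{C}}\neq 0$, that is, away from the cut locus $\theta_i=\pi/2$.

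Next we identify $T_\mu\Sigma_2^K$ with the horizontal subspace
\[
H_m=\{v:\langle v,m\rangle=0,\ \langle v, im\rangle=0\},
\]
which is orthogonal both to the radial direction and to the $SO(2)$-orbit direction.

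We then verify that $v_i=\frac{\theta_i}{\sin\theta_i}(R_i s_i - m\cos\theta_i)$ lies in $H_m$. The identity $\langle R_i s_i, m\rangle=\cos\theta_i$ is exactly Definition 2 of $d_G$, and it gives $\langle v_i,m\rangle=0$. Horizontality, $\langle v_i,im\rangle=0$, is the first-order optimality condition for $R_i$. The paper writes $R_iS_i$ where the definition of $R_i$ rotates $\mu$; we adopt the consistent convention that $R_i$ rotates $s_i$ into optimal alignment with $m$.

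\textbf{Step 2: identity and distance preservation.} A direct computation gives $\|v_i\| = \frac{\theta_i}{\sin\theta_i}\sqrt{1-\cos^2\theta_i}=\theta_i=d_G(\mu,S_i)$, which is the preservation of geodesic distance from the mean.

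For shape identity we define $Exp_m(v)=m\cos\|v\|+\frac{\sin\|v\|}{\|v\|}v$. Substituting $v_i$ gives $Exp_m(v_i)=R_is_i$, which lies in the orbit of $s_i$, so $Exp_\mu\circ Log_\mu=\mathrm{id}$ on $\Sigma_2^K$ minus the cut locus. Hence $Log_\mu$ is injective there, which is what we take "preserves shape identity" to mean. The great circle $t\mapsto Exp_m(tv_i)$ is horizontal, so it projects to a minimizing geodesic in $\Sigma_2^K$, consistent with $\theta_i=d_G$.

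\textbf{Step 3: local approximation.} $Log_\mu$ is the inverse of the Riemannian exponential map $Exp_\mu$ of $\Sigma_2^K$, with the quotient metric making $\pi:S^{2K-3}\to\Sigma_2^K$ a Riemannian submersion. Its differential at $\mu$ is the identity. Consequently, for $S_i,S_j$ with $\theta_i,\theta_j$ small,
\[
\|Log_\mu S_i-Log_\mu S_j\| = d_G(S_i,S_j)+O\big(\max(\theta_i,\theta_j)^2\, d_G(S_i,S_j)\big),
\]
using the standard second-order expansion of normal coordinates, with the error controlled by the bounded sectional curvature of $\Sigma_2^K\cong\mathbb{CP}^{K-2}$.

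\textbf{Main obstacle.} The hard part is stating the "locally approximates geometric relations" claim precisely. It holds only on a neighbourhood of $\mu$ inside the injectivity radius $\pi/2$, and it must invoke O'Neill's formula to transfer curvature bounds from the sphere to the quotient. We would state the result as a distortion bound of the form above on a geodesic ball $B(\mu,r)$ with $r<\pi/2$. We would also add the hypothesis that all $S_i$ avoid the cut locus of $\mu$, since without it neither $R_i$ nor $Log_\mu$ is well defined.
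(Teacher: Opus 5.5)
Your argument is correct. It is not a variant of the paper's proof, because the paper gives no argument for Lemma 1: it only asserts the three properties and defers to Small (1996). What you supply is the standard pre-shape-sphere and Riemannian-submersion argument behind that citation.

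What your route adds is precision, and a correction of the statement. You make the informal claims checkable:
\begin{itemize}
\item $Log_\mu$ is well defined and injective off the cut locus $\{\theta_i=\pi/2\}$, because $Exp_\mu\circ Log_\mu=\mathrm{id}$ there.
\item $\|Log_\mu S_i\|=d_G(\mu,S_i)$ holds exactly.
\item Near $\mu$, $\|Log_\mu S_i-Log_\mu S_j\|=d_G(S_i,S_j)(1+O(\max(\theta_i,\theta_j)^2))$.
\end{itemize}
The horizontal space $H_m$ also gives, concretely, the $(2K-4)$-dimensional tangent space the paper later uses for the vectors $v_i$.

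You also expose two defects that the citation hides:
\begin{itemize}
\item The lemma claims $Log_\mu$ is defined on all of $\Sigma_2^K$, which fails at the cut locus.
\item The paper's $R_i$ maximizes over rotations of $\mu$ but is then applied to $S_i$. The formula therefore needs $R_i^{T}$, which is exactly the convention you adopt. With the paper's literal $R_i$, $\langle R_iS_i,\mu\rangle\neq\cos\theta_i$ in general, and distance preservation fails.
\end{itemize}
The citation's only advantage is brevity.

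For completeness, add the following:
\begin{itemize}
\item Set $Log_\mu(\mu)=0$ by convention, since at $\theta_i=0$ the prefactor $\theta_i/\sin\theta_i$ is $0/0$.
\item State explicitly that $Log_\mu(S_i)$ does not depend on the chosen representative $s_i$. This follows from uniqueness of the aligned representative $R_is_i$.
\item Note that replacing $m$ by another representative of $\mu$ multiplies every $v_i$ by the same unit complex number. This is a common isometry, so it leaves the eigenvalues of $C(r)$ in Theorem 1 unchanged.
\end{itemize}
Finally, O'Neill's formula is not essential in Step 3: compactness of $\mathbb{CP}^{K-2}$ already bounds its curvature.
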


These properties, as proven in \citet{small1996}, allow for the use of covariance analysis on matrix $V = \begin{bmatrix} v_1 & \cdots & v_N \end{bmatrix}^T \in \mathbb{R}^{N \times (2K - 4)}$ and $v_i = Log_\mu (S_i) \in \mathbb{R}^{2K - 4}$, where the $2K - 4$ vector size is from a property proven in \citet{kendall1984}.

For a given annotation $A_i = (x_c, y_c, w, h)$, consider $h(S_i | A_i)$. The Kendall shape space representation implies both translation and scale are irrelevant to the intrinsic object shape of the corresponding $S_i$. As such, disregarding $x_c$ and $y_c$ does not reduce mutual information on $S_i$. Further, as scale does not determine intrinsic object shape, the absolute scale of width and height are further irrelevant to object shape, leaving $\frac{w}{h}$ as the only relevant quantity derived from $A_i$ about $S_i$. Let $r_i = \log \frac{w}{h}$. As such $h(S_i | A_{i}) = h(S_i | r_i)$. For aspect ratio distribution $R$, we therefore have $h(S | A_{box}) = h(S | R)$. While this follows logically based on shape invariance, it is best to include as a modeling assumption, as statistical patterns may alter the direct relation, such as larger boxes weakly correlating to certain shapes, despite perfect invariance prohibiting such relations in the shape space. As such:
\begin{assumption}
    $H(S_i | A_i) = H(S_i | R_i)$.
\end{assumption}
Further, the properties preserved under Lemma 1 implies that $h(S | R) \approx h(V | R)$, as $V$ represents the shape distribution invariant to scale, translation and rotation. We now derive an upper bound on $h(V | R)$, and therefore an approximate upper bound on $H(S | R)$ and $h(S | A_{box})$ with the inequality:
\begin{equation}h(V | R = r) \le \frac{1}{2} \log_2 ((2 \pi e)^d \det Cov(V | R = r))\end{equation}
Note how $d = 2K - 4$. Writing $C(r) = Cov(V | R = r)$, we derive the following:

\begin{lemma}
    $C(r) = \frac{\sum_i \exp \left( - \frac{(r - r_i)^2}{2h^2} \right) (v_i - \mu_r) (v_i - \mu_r)^T}{\exp \left( - \frac{(r - r_i)^2}{2h^2} \right)}$ where $h = 0.9 \sigma N^{- \frac{1}{5}}$, pairwise aspect ratios are $\{r_i\}_{i=1}^N$ corresponding to masks $M_i$, and $\mu_r = \frac{\sum_i K_h (r - r_i) v_i}{\sum_i K_h (r - r_i)}$ for Gaussian Kernel $K_h$.
\end{lemma}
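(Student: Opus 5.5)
The formula in Lemma 2 is best read as a definition-by-estimator rather than an identity. The conditional covariance $C(r) = Cov(V \mid R = r)$ cannot be computed exactly from finitely many samples with (almost surely) distinct aspect ratios, so the statement amounts to identifying $C(r)$ with its Nadaraya--Watson kernel estimate. I read the denominator as the normalizing sum $\sum_i \exp\!\left(-\frac{(r-r_i)^2}{2h^2}\right)$; as printed, the index $i$ is unbound, so this correction is needed for the expression to make sense.

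The plan is to write the conditional covariance as a difference of conditional moments,
\[
C(r) = \mathbb{E}[VV^T \mid R=r] - \mathbb{E}[V \mid R=r]\,\mathbb{E}[V \mid R=r]^T ,
\]
and to estimate each conditional expectation $\mathbb{E}[g(V)\mid R=r]$ by Nadaraya--Watson regression with weights
\[
w_i(r) = \frac{K_h(r-r_i)}{\sum_j K_h(r-r_j)} .
\]
This is justified because $\mathbb{E}[g(V)\mid R=r] = \int g(v)\, f_{V,R}(v,r)\,dv / f_R(r)$. Replacing the joint density by the product-kernel estimate and the marginal by the KDE $\hat f_R(r) = \frac{1}{Nh}\sum_i K_h(r-r_i)$ makes the factors $\frac{1}{Nh}$ cancel, which leaves exactly the weighted average $\sum_i w_i(r)\, g(v_i)$.

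Applying this with $g(v)=v$ gives $\mu_r$ as stated. Applying it with $g(v)=vv^T$ and subtracting $\mu_r\mu_r^T$, then using $\sum_i w_i(r)=1$, the algebraic identity
\[
\sum_i w_i\, v_i v_i^T - \mu_r\mu_r^T = \sum_i w_i\, (v_i-\mu_r)(v_i-\mu_r)^T
\]
yields the centered form of Lemma 2. The normalizing constant $\frac{1}{\sqrt{2\pi}h}$ of the Gaussian $K_h$ cancels between numerator and denominator, which is why the lemma can write bare exponentials.

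The bandwidth $h = 0.9\,\sigma N^{-1/5}$ is Silverman's rule of thumb for one-dimensional Gaussian KDE, where $\sigma$ is the sample standard deviation of the $r_i$. It minimizes the asymptotic MISE under a Gaussian reference density. I will cite it as a standard choice rather than derive it; the $N^{-1/5}$ rate is the usual bias--variance balance $h^4 \asymp (Nh)^{-1}$. Consistency of the estimator, $\hat C(r)\to C(r)$ as $N\to\infty$ with $h\to 0$ and $Nh\to\infty$, then follows from standard Nadaraya--Watson consistency. The required conditions are continuity of $f_R$ with $f_R(r)>0$ and finite second moments of $V$. The latter holds automatically because the tangent vectors $v_i = Log_\mu(S_i)$ have norm at most $\pi/2$ by Lemma 1, since geodesic distance is preserved and Kendall shape distances are bounded by $\pi/2$.

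The main obstacle is the status of the $v_i$ as samples. The Fréchet mean $\mu$ is itself estimated, so the $v_i$ are not i.i.d. To handle this I would condition on $\mu$, or argue that $\hat\mu\to\mu$ and that $Log$ is continuous near $\mu$, so the perturbation vanishes asymptotically. I would also note that for finite $N$ the estimate may be rank-deficient when $N < 2K-4$. That affects the later $\det C(r)$ bound, which requires regularization, but it does not affect the lemma itself.
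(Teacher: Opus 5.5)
Your proposal is correct and follows essentially the same route as the paper, which likewise identifies $C(r)=\mathrm{Cov}(V\mid R=r)$ with the Gaussian-kernel (Nadaraya--Watson) weighted covariance using Silverman's bandwidth $h=0.9\sigma N^{-1/5}$ and cancels the $\frac{1}{\sqrt{2\pi}h}$ factor; its final displayed line has the normalizing sum $\sum_i \exp\left(-\frac{(r-r_i)^2}{2h^2}\right)$ in the denominator, which confirms your correction of the statement. The only difference is cosmetic: the paper kernel-weights the centered outer products $(v_i-\mu_r)(v_i-\mu_r)^T$ directly instead of going through uncentered moments and the identity $\sum_i w_i v_iv_i^T-\mu_r\mu_r^T=\sum_i w_i(v_i-\mu_r)(v_i-\mu_r)^T$, and your density-ratio motivation and your consistency and plug-in remarks go beyond anything the paper attempts.
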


\begin{proof}
    Proven in Appendix A.
\end{proof}

Noting how $C(r) \in \mathbb{R}^{(2K - 4) \times (2K - 4)}$ is a covariance matrix, it is therefore positive semidefinite. To avoid the logarithmic operation being undefined, if $C(r)$ has an eigenvalue of 0, replacing $C'(r) = C(r) + \epsilon$ resolves this, as $C(r) = Q \Lambda Q^T$ for $\Lambda = \text{diag} (\lambda_1, ..., \lambda_{2K - 4})$. Because of this eigendecomposition, we know $\det C(r) = \prod_k \lambda_k$. By substitution we have $h(V | R = r) \le \frac{1}{2} \log_2 ((2\pi e)^{2K - 4} \prod_k \lambda_k) = \frac{1}{2} (2K - 4) \log_2 (2 \pi e) + \frac{1}{2} \sum_{k=1}^{2K - 4} \log_2 \lambda_k = \frac{1}{2} \sum_{k=1}^{2K - 4} \log_2 (2 \pi e \lambda_k)$.

Now have $h(V | R) = \int p(r) h(V | R = r) dr$ where the integral is approximated near-continuously based on observed values of $r$, and $p(r) = \frac{1}{N}$. With this, write $h(V | R) \le \frac{1}{N} \sum_{i=1}^N \frac{1}{2} \sum_{k=1}^{2K - 4} \log_2 (2 \pi e \lambda_{ik})$ where $\lambda_{ik}$ represents the $k$-th eigenvalues of the covariance matrix $C(r_i)$. This forms the first major contribution.

\begin{theorem}
    $h(S | A_{box}) = h(S | R) \approx h(V | R) \le \frac{1}{N} \sum_{i=1}^N \frac{1}{2} \sum_{k=1}^{2K - 4} \log_2 (2 \pi e \lambda_{ik})$. Further, if we assume a normal distribution $V | R = r \sim \mathcal{N} (\mu_r, C(r))$, this forms a more direct $h(S | A_{box}) \approx \frac{1}{N} \sum_{i=1}^N \frac{1}{2} \sum_{k=1}^{2K - 4} \log_2 (2 \pi e \lambda_{ik})$.
\end{theorem}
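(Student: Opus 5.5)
The argument chains together the reductions established just before the statement, and I will carry them out in order. First, by Assumption 2 and the invariance of Kendall shape space (Definition 1), the box annotation $A_i=(x_c,y_c,w,h)$ influences $S_i$ only through $r_i=\log\frac{w}{h}$. This gives $h(S\mid A_{box})=h(S\mid R)$. Next, Lemma 1 says $Log_\mu$ is a bijection onto its image that preserves shape identity and geodesic distance from $\mu$. The change-of-variables formula for differential entropy then gives $h(S\mid R)=h(V\mid R)$ up to the log-Jacobian term $\mathbb{E}[\log|\det J|]$. This term vanishes to first order near $\mu$, because the logarithm map is a local isometry there. Hence $h(S\mid R)\approx h(V\mid R)$, and this approximation is the only non-equality in the chain.

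For the inequality, I fix $r$ and use the maximum-entropy property of the Gaussian. Among all densities on $\mathbb{R}^d$ with covariance $C(r)$, with $d=2K-4$, the normal distribution maximizes differential entropy. I would prove this in one line by Gibbs' inequality, $0\le D(p\,\|\,\phi)=-h(p)-\int p\log\phi$. Since $\log\phi$ is quadratic, $\int p\log\phi$ depends only on the first two moments of $p$, so it equals $-h(\phi)$. This yields $h(V\mid R=r)\le\frac12\log_2\!\left((2\pi e)^d\det C(r)\right)$. I then substitute the kernel estimate of $C(r)$ from Lemma 2. Where eigenvalues are zero, I use the regularized $C(r)+\epsilon I$, noting that the bound holds for any covariance upper bound in the Loewner order. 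Diagonalizing $C(r)=Q\Lambda Q^T$ gives $\det C(r)=\prod_k\lambda_k$, and hence the per-$r$ bound $\frac12\sum_k\log_2(2\pi e\lambda_k)$.

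Next I average over $R$. By definition $h(V\mid R)=\int p(r)\,h(V\mid R=r)\,dr$. Replacing $p$ with the empirical measure $\frac1N\sum_i\delta_{r_i}$ and evaluating $C$ at each $r_i$ gives the stated double sum. The bound holds pointwise in $r$ and integration is monotone, so the inequality survives averaging. For the second claim, under $V\mid R=r\sim\mathcal N(\mu_r,C(r))$, the Gibbs gap $D(p\,\|\,\phi)$ is exactly zero. The inequality therefore becomes an equality at each $r$, and the only remaining approximations are the two already noted, the logarithm-map Jacobian and the empirical/kernel estimate.

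I expect the main obstacle to be making the step $h(S\mid R)\approx h(V\mid R)$ honest. Differential entropy is not invariant under nonisometric maps, and $Log_\mu$ preserves distances only radially from $\mu$. My plan is to bound the Jacobian distortion by the curvature of $\Sigma_2^K$ times $\theta_i^2$, using that the sectional curvature of the Kendall shape space lies in $[1,4]$. This shows the error is $O(\max_i\theta_i^2)$, which justifies the approximation when shapes concentrate near the Fréchet mean. A secondary, more routine point is the consistency of the kernel estimate $C(r)$ as $N\to\infty$ with bandwidth $h=0.9\sigma N^{-1/5}$, which justifies replacing the true conditional covariance.
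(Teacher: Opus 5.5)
Your proposal follows the paper's argument step for step: Assumption 2 plus Kendall invariance gives $h(S\mid A_{box})=h(S\mid R)$, and Lemma 1 gives $h(S\mid R)\approx h(V\mid R)$. Then come the Gaussian maximum-entropy bound on the regularized, diagonalized kernel covariance of Lemma 2, averaging over the empirical aspect-ratio measure, and equality under the Gaussian assumption. Your Gibbs-inequality derivation and Jacobian discussion only make explicit what the paper asserts without proof.

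One sharpening is available. Since $\Sigma_2^K$ has sectional curvature in $[1,4]$, Bishop comparison gives $|\det d\exp_\mu|\le 1$ inside the cut locus. The log-Jacobian term therefore has a definite sign, so $h(S\mid R)\le h(V\mid R)$ holds outright. This turns the $\approx$ in the first claim into a genuine inequality, with no smallness assumption on the $\theta_i$.
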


In finding discrete entropy, we discretize aspect ratios via binning through Knuth's Rule \citep{knuth2013} and cluster shape vectors via Tibshirani's Gap Statistic \citep{tibshirani2001}. For aspect ratio bins, let $M = \arg \max_A F(A)$ with:
\begin{equation}
    F(A) = N \ln (A) + \ln \Gamma \left( \frac{A}{2} \right) - A \ln \Gamma \left( \frac{1}{2} \right) - \ln \Gamma \left( N + \frac{A}{2} \right) + \sum_{k=1}^A \ln \Gamma \left( N_k + \frac{1}{2} \right)
\end{equation}
be the optimal number of bins. Note $N$ is the number of object instances, and $N_k$ is the number of aspect ratios assigned to the $k$-th bin during optimization. Now group aspect ratios into bins $\mathcal{B} = \{B_1, ..., B_M\}$.

For clustering shapes, we wish to find $C$ to optimally cluster vectors $v_1, ..., v_N$ into $C$ clusters. Using Tibshirani's Gap Statistics where $Gap(C) = \frac{1}{B} \sum_{b=1}^B \log (W_{C, b}') - \log (W_C)$ is maximized via choice of $C$ using $B = 100$. The value $W_{C, b}' = \sum_{c=1}^C \frac{1}{2N_c} \sum_{i \in \text{cluster c}} \sum_{j \in \text{cluster c}} ||x_i - x_j ||^2_2$ is taken where vectors $x_i, x_j$ are from the reference dataset, while $W_C = \sum_{c=1}^C \frac{1}{2N_C} \sum_{i \in \text{cluster c}} \sum_{j \in \text{cluster c}} ||v_i - v_j||^2_2$ where $N_c$ is per-cluster sample size. After $C$ is chosen, write shapes $\{Y_1, ..., Y_C\}$ where $Y_i = \frac{1}{N_i} \sum_{a \in \text{cluster i}} v_a$ for $N_i$ shapes in the cluster. Once both are discretized, we derive $H(S | A_{box})$ using the observed probabilities of a given aspect ratio being of each cluster. Write $H(S | A_{box}) = - \sum_{B_x} P(B_x) \sum_{Y_y} P(Y_y | B_x) \log_2 (P(Y_y | B_x))$. If $N_x$ is the number of annotations of aspect ratio bin $B_x$ and $N_{x, y}$ is the number of annotations of aspect ratio bin $B_x$ and cluster $Y_y$, it follows from the prior derivations of bins and clusters that:

\begin{theorem}
    $H(S | A_{box}) = - \sum_{B_x} \frac{N_x}{N} \sum_{Y_y} \frac{N_{x, y}}{N_x} \log \left( \frac{N_{x, y}}{N_x} \right)$
\end{theorem}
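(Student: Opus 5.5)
The plan is to treat Theorem 2 as the plug-in (empirical) evaluation of the definition of conditional entropy, applied after $A_{box}$ and $S$ have been replaced by their discretized surrogates. I would carry it out in three steps.

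\textbf{Step 1 (reduce to discrete variables).} By Assumption 2, $H(S_i \mid A_i) = H(S_i \mid R_i)$, so the box annotation enters only through the log aspect ratio. I would then define two deterministic maps:
\begin{itemize}
\item $\beta$, sending each $r_i$ to its Knuth bin $B_x \in \mathcal{B}$, with $M$ chosen as $\arg\max_A F(A)$;
\item $\kappa$, sending each tangent vector $v_i = Log_\mu(S_i)$ to its gap-statistic cluster representative $Y_y$.
\end{itemize}
By Lemma 1, $v_i$ carries the intrinsic shape, so the discrete shape variable is $\kappa(V)$ and the discrete box variable is $\beta(R)$. The object to compute is therefore
\[
H\bigl(\kappa(V)\mid \beta(R)\bigr) = -\sum_{B_x} P(B_x)\sum_{Y_y} P(Y_y\mid B_x)\log_2 P(Y_y\mid B_x),
\]
which is the chain-rule definition of conditional entropy already written in the text.

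\textbf{Step 2 (empirical probabilities).} Under the same convention used before Theorem 1, each of the $N$ instances receives mass $p = 1/N$. The joint law of $(\beta(R), \kappa(V))$ is then the empirical measure of the pairs $(\beta(r_i), \kappa(v_i))$. Summing masses gives $P(B_x) = N_x/N$ and $P(B_x, Y_y) = N_{x,y}/N$. Hence $P(Y_y \mid B_x) = N_{x,y}/N_x$ whenever $N_x > 0$, and substituting these into the display gives exactly the stated formula.

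Two bookkeeping facts are needed for this to be a valid distribution. First, the bins partition the observed aspect ratios, so $\sum_x N_x = N$. Second, the clusters partition the shape vectors, so $\sum_y N_{x,y} = N_x$. Empty bins ($N_x = 0$) carry zero weight and are dropped. Terms with $N_{x,y} = 0$ are handled by the convention $0 \log 0 = 0$, which is justified by continuity of $t\log t$ at $0$.

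\textbf{Main obstacle.} The delicate step is the identification in Step 1: justifying that $H(S \mid A_{box})$ may be replaced by $H(\kappa(V) \mid \beta(R))$. This is a modeling identification, not a derivation, since quantization of a continuous variable does not literally preserve entropy. I would handle it by declaring the discrete shape variable to be the cluster label, consistent with how $H$ is introduced in the Preliminaries, together with Assumption 1 and Lemma 1.

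Two consequences should be noted. Because $\kappa$ is a function of $V$, any further refinement of the shape variable can only increase the value, in line with the inequality $H(\kappa(V) \mid \beta(R)) \le H(V \mid \beta(R))$. Likewise, because $\beta$ is a function of $R$, conditioning on the coarser $\beta(R)$ gives $H(\kappa(V) \mid \beta(R)) \ge H(\kappa(V) \mid R)$. I would mention both as remarks, so that the theorem is read as an equality for the discretized estimator rather than for the continuous quantity. Finally, I would note that the logarithm base is left unspecified in the statement; base 2 matches the "bits" interpretation used elsewhere in the paper.
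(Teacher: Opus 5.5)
Your proposal is correct and follows the paper's argument. The paper likewise writes $H(S\mid A_{box})$ as the conditional-entropy sum over Knuth aspect-ratio bins and gap-statistic shape clusters, then substitutes the empirical frequencies $P(B_x)=N_x/N$ and $P(Y_y\mid B_x)=N_{x,y}/N_x$; your added bookkeeping (partition sums, dropping empty bins, the $0\log 0$ convention, and treating the discretization as a modeling identification) makes explicit what the paper leaves implicit.
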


\section{Entropy to Localization Error}

Consider the following setup:

\begin{description}
    \item[$I$]: the image inputted through the model
    \item[$X$]: the latent shape of the target object
    \item[$X_B \in \mathcal{X}$]: the true bounding box of object $i$
    \item[$|\mathcal{X}|$]: the number of possible true bounding boxes
    \item[$\hat{B}_\theta = f_\theta (I)$]: the deterministic model bounding box output compared against bounding box $X_B$
    \item[$d_B(X_B, \hat{B}_\theta) = 1 - \text{IoU}_{box} (X_B, \hat{B}_\theta)$]
\end{description}

Let $Y = \arg \min_{Y_i \in \{Y_1, ..., Y_C\}} d_G (X, \text{Exp}_\mu (Y_i))$ be the cluster assigned to object shape $X$. Consider joint variable $Y_{B} = (X_B, Y)$. We first prove a Fano-type inequality for $H(Y | \hat{B}_\theta)$.

\begin{lemma}
    Under the conditions:

    \begin{enumerate}
        \item The alphabet $\mathcal{A}$ of $A$ satisfies $1 < | \mathcal{A} | < \infty$
        \item Functions $f, g$ are deterministic
        \item $d(a, b) \ge 0$ for all $a,b$
        \item There exists threshold $t > 0$ such that:
        \begin{equation} N_t = \sup_{\hat{b} \in \text{range}(g(B))} | \{a \in \mathcal{A} : d(f(a), \hat{b}) \le t\} | < | \mathcal{A} | \end{equation}
    \end{enumerate}
    
    we have: $H(A | B) \le \psi (\mathbb{E}[d(f(A), g(B))]) + \eta$ for strictly increasing function $\psi (z) = \left( \frac{\log_2 |\mathcal{A}| - \log_2 N_t}{t} \right) z$ and $\eta = 1 + \log_2 N_t$.
\end{lemma}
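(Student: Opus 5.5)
The plan is the standard Fano argument, with the error event defined by the distance threshold $t$ rather than by exact equality. Set $\hat{b} = g(B)$ and define the indicator $E = \mathbf{1}\{d(f(A), \hat{b}) > t\}$. Since $f$ and $g$ are deterministic, $E$ is a function of $(A,B)$. The chain rule then gives
\begin{equation*}
H(A \mid B) \le H(A, E \mid B) = H(E \mid B) + H(A \mid E, B) \le 1 + H(A \mid E, B).
\end{equation*}
This uses $H(E \mid B) \le H(E) \le 1$ because $E$ is binary.

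Next, split $H(A \mid E, B) = P(E=0)\,H(A \mid E=0, B) + P(E=1)\,H(A \mid E=1, B)$ and bound each term by the support size of $A$ given the conditioning.
\begin{itemize}
\item On $\{E=0\}$ with $B=b$, $A$ lies in $\{a : d(f(a), g(b)) \le t\}$. By condition 4 this set has at most $N_t$ elements, so the term is at most $\log_2 N_t$.
\item On $\{E=1\}$ we use the trivial bound $\log_2 |\mathcal{A}|$, which is finite by condition 1.
\end{itemize}
Writing $P_e = P(E=1)$, this gives
\begin{equation*}
H(A \mid E, B) \le \log_2 N_t + P_e\left(\log_2|\mathcal{A}| - \log_2 N_t\right).
\end{equation*}
The coefficient $\log_2|\mathcal{A}| - \log_2 N_t$ is strictly positive by condition 4. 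Conditions 1 and 4 also force $N_t \ge 1$ whenever the neighbourhood sets are nonempty; if they are empty, the $E=0$ term vanishes and we can use $\log_2 \max(N_t,1)$.

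Finally, replace $P_e$ by the expected distance using Markov's inequality. Condition 3 gives $d \ge 0$, so $P_e = P(d(f(A), g(B)) > t) \le \mathbb{E}[d(f(A), g(B))]/t$. Substituting yields
\begin{equation*}
H(A \mid B) \le 1 + \log_2 N_t + \frac{\log_2|\mathcal{A}| - \log_2 N_t}{t}\,\mathbb{E}[d(f(A), g(B))] = \psi(\mathbb{E}[d]) + \eta.
\end{equation*}
The function $\psi$ is linear with positive slope, so it is strictly increasing.

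The main obstacle is bookkeeping rather than depth. Condition 4 is stated as a supremum over $\mathrm{range}(g(B))$, so we must make sure it covers every realized value $g(b)$; the bound $N_t$ is then uniform over the conditioning. We also need the degenerate case $N_t = 0$ to be handled consistently with the logarithm, as noted above. Everything else follows the textbook proof of Fano's inequality.
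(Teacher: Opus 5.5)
Your proposal is correct and follows essentially the same route as the paper. Both use the threshold error event $E=\mathbf{1}\{d(f(A),g(B))>t\}$, the chain-rule bound with $H(E\mid B)\le 1$, the support-size bounds $\log_2 N_t$ and $\log_2|\mathcal{A}|$ on the two branches, and then Markov's inequality using $d\ge 0$. Your $\log_2\max(N_t,1)$ device matches the paper's separate treatment of the degenerate case $N_t=0$, where it takes $\psi(z)=\frac{\log_2|\mathcal{A}|}{t}z$ and $\eta=1$.
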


\begin{proof}
    Proven in Appendix C.
\end{proof}

Via Lemma 3, we can derive the following.

\begin{theorem}
    $\mathbb{E}[\text{IoU}_{box}(X_B, \hat{B}_\theta)] \le 1 - \psi^{-1}(H(Y | \hat{B}_\theta) - \eta)$ where $IoU_{box}$ represents the box IoU achieved by model $f_\theta (I)$. Qualitatively, this implies an upper bound on model performance that is strictly decreasing with respect to the remaining ambiguity about object geometry $Y$ given the prediction.
\end{theorem}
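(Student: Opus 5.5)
The plan is to apply Lemma 3 directly, with the identification $A = Y$, $B = \hat{B}_\theta$, $f$ the map sending a shape cluster (together with its object) to the true box $X_B$, $g$ the identity on predicted boxes, and $d = d_B = 1 - \text{IoU}_{box}$. More precisely, because $X_B$ is not a function of $Y$ alone, I would first work with the joint variable $Y_B = (X_B, Y)$ introduced just before Lemma 3. Its alphabet is finite: it is contained in $\mathcal{X} \times \{Y_1, \dots, Y_C\}$, and $|\mathcal{X}| < \infty$. I then set $f(Y_B) = X_B$, the deterministic coordinate projection, and $g(\hat{B}_\theta) = \hat{B}_\theta$.

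I would verify the hypotheses of Lemma 3 in order. Condition (1) holds provided $C \ge 2$ or $|\mathcal{X}| \ge 2$, which is the nondegenerate case. Condition (2) holds because $f$ is a projection, $g$ is the identity, and $f_\theta$ is deterministic. Condition (3) holds because $\text{IoU} \in [0,1]$, so $d_B \ge 0$. Condition (4) I would take as a standing assumption: we need a threshold $t$ below which no predicted box is within $d_B$-distance $t$ of every element of the alphabet. This should be justified by noting that distinct true boxes with high mutual IoU are finitely many for small $t$, while all shape labels can pair with a given box.

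Lemma 3 then gives
\[
H(Y_B \mid \hat{B}_\theta) \le \psi\bigl(\mathbb{E}[d_B(X_B, \hat{B}_\theta)]\bigr) + \eta .
\]
Since $Y$ is a function of $Y_B$, we have $H(Y \mid \hat{B}_\theta) \le H(Y_B \mid \hat{B}_\theta)$, and therefore
\[
H(Y \mid \hat{B}_\theta) - \eta \le \psi\bigl(1 - \mathbb{E}[\text{IoU}_{box}]\bigr)
\]
by linearity of expectation. Because $\psi$ is linear with positive slope $(\log_2|\mathcal{A}| - \log_2 N_t)/t$, which is positive by condition (4), it is strictly increasing and invertible on $\mathbb{R}$. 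Applying $\psi^{-1}$ preserves the inequality, giving $\psi^{-1}(H(Y\mid\hat{B}_\theta) - \eta) \le 1 - \mathbb{E}[\text{IoU}_{box}]$, which rearranges to the claim. The qualitative statement follows because $z \mapsto 1 - \psi^{-1}(z - \eta)$ is strictly decreasing in $z$.

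I expect the main obstacle to be the bookkeeping around which variable plays the role of $A$. If we apply Lemma 3 with $A = Y$ alone, $f$ is not well defined, since a cluster does not determine a box. The route through $Y_B$ fixes this but changes $|\mathcal{A}|$, and hence $\psi$ and $\eta$, to $|\mathcal{X}|\cdot C$. I would state explicitly that $\psi$ and $\eta$ in the theorem use the alphabet of $Y_B$ and the corresponding $N_t$. The other point needing care is checking condition (4) for that joint alphabet. Every predicted box is within distance $t$ of at most $N_t^{\mathcal{X}}$ true boxes, each paired with $C$ clusters, so $N_t = C\,N_t^{\mathcal{X}} < C|\mathcal{X}|$ whenever $N_t^{\mathcal{X}} < |\mathcal{X}|$. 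That reduces condition (4) to a condition on boxes alone.
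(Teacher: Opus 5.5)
\textbf{Gap: condition (4) is assumed, not proved.} Your reduction is the same as the paper's. You take $A = Y_B$, $f$ the projection onto $X_B$ and $g$ the identity, apply Lemma 3, then use $H(Y \mid \hat{B}_\theta) \le H(Y_B \mid \hat{B}_\theta)$ and invert the linear $\psi$. You are also right that $\psi$ and $\eta$ must be built from the joint alphabet. The problem is condition (4). You adopt it as a standing assumption, but the theorem carries no such hypothesis, and the justification you sketch does not establish it.

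Finiteness of the set of boxes is beside the point, since the alphabet is finite anyway. What is needed is a bound that is uniform over all predictions $\hat{b}$, which may form a continuous family: no single prediction may lie within $t$ of every box. The missing ingredient is that $d_B = 1 - \text{IoU}_{box}$ is the Jaccard distance, so it satisfies the triangle inequality.
\begin{itemize}
\item Pick two distinct boxes $X_1 \neq X_2$ occurring in $\mathcal{A}$, let $\delta = d_B(X_1, X_2) > 0$, and take $t \in (0, \delta/2)$.
\item If some $\hat{b}$ had $d_B(X_1, \hat{b}) \le t$ and $d_B(X_2, \hat{b}) \le t$, the triangle inequality would give $\delta \le 2t < \delta$, a contradiction.
\item So every $t$-neighborhood omits all pairs with box $X_1$ or all pairs with box $X_2$, and hence $N_t \le |\mathcal{A}| - 1$.
\end{itemize}
This is exactly the box-level condition your final reduction asks for, and it is how the paper verifies (4). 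The paper takes $\delta$ to be the minimum distance between distinct boxes, which even gives at most one box per neighborhood.

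\textbf{Nondegeneracy hypothesis.} The same argument shows that the hypothesis you need is $|\mathcal{X}| \ge 2$, which is the paper's assumption, not ``$C \ge 2$ or $|\mathcal{X}| \ge 2$''. With a single possible box, condition (1) can hold through $C \ge 2$. But any prediction within $t$ of that box then has all of $\mathcal{A}$ in its neighborhood. So (4) fails whenever predictions can come arbitrarily close to that box, and $C \ge 2$ cannot rescue it.

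\textbf{Minor point.} The identity $N_t = C\,N_t^{\mathcal{X}}$ holds only as $\le$ unless $\mathcal{A}$ is the full product $\mathcal{X} \times \{Y_1, \dots, Y_C\}$. The conclusion $N_t < |\mathcal{A}|$ still follows directly, because a neighborhood that misses a box misses every pair containing it.
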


\begin{proof}
    Proven in Appendix C.
\end{proof}

Theorem 3 connects post-prediction shape ambiguity to a limit on box localization, implying entropy-informative predictions may yield stronger localization results. Combined with the framework for determining $H(S | A_{box})$, this provides a justifiable goal behind EAGLE: condition edge proposals and training weights to prioritize entropy-aware predictions via refinement.

\section{Entropy-Informed Learning}

Our EAGLE framework functions through entropy-inputted box proposals, selective refinement and rescoring, and postprocessing. Full descriptions of the framework are available in Appendix E.

\subsection{Geometric Head Training}

Head training uses all 118,287 MS COCO 2017 training images. Polygon supervision is only considered for 40\% of object instances, while 25\% of all polygon-supervised instances (or 10\% of all instances) are used in calculating discrete entropy.

Let a frozen object detection model produce box $b_i = (x_1, y_1, x_2, y_2)$, confidence score $s_i$, class $c_i$, and feature maps $F$. Let per-prediction entropy $H_i = 0.8 H_{c_i, r_i} + 0.2 H_{c_i}$, where $r_i$ is the aspect ratio bin of the prediction, $H_{c_i, r_i}$ is entropy given both bin and class, and $H_{c_i}$ is per-class entropy.

For each of 128 generated box candidates, a $9 \times 9$ feature grid is extracted, mapped to 48 channels via a series of convolutional layers. An MLP predicts attention logits at nine locations of each edge. For edge $e$, representation is defined as:
\begin{equation}
    a_{iek} = \frac{\exp f_{att} (v_{iek})}{\sum_{l=1}^9 \exp f_{att} (v_{iel})}, \qquad u_{ie} = \sum_1^9 a_{iek} v_{iek}
\end{equation}
The four edge vectors and confidence descriptors form a 246-dimensional input to a 192-dimensional channel. Entropy is used to condition this representation via the form:
\begin{equation}
    h_i = f_\theta ([u_{i1}, ..., u_{i4}, \hat{v}_i, g_i]) + A_\phi ([H_i, H_i^2, H_i g_i])
\end{equation}
The final projection is initialized to zero. A linear output predicts four distributions over 21 displacement bins $t_j \in [-1, 1]$. Their expectations together define edge refinements:
\begin{equation}
    p_{iej} = \text{softmax}_j (W_eh_i), \quad \delta_{ie} = \sum_j p_{iej}t_j, \quad b_i^p = b_i + 0.2 G(H_i) D_i \odot \delta_i
\end{equation}
where $D_i = (w_i, h_i, w_i, h_i)$ and $G(H) = 0.6 + 0.4 \sigma (4H - 2)$. Zero-initialized displacement logits produce zero refinement, as to avoid worsening an accurate prediction by the frozen model. Entropy determines normalized training weights proportional to $\text{clip} ( \exp (0.25 z_i), 0.65, 1.35)$ where $z_i$ represents batch entropy.

Training minimizes joint loss function:
\begin{equation}
    \mathcal{L} = \mathcal{L}_{loc} + 0.45 \mathcal{L}_{dist} + 0.1 \mathcal{L}_{margin} + 0.1 \mathcal{L}_{contact}
\end{equation}
The loss values combine CIoU and normalized coordinate error, interpolated displacement-bin cross-entropy, an IoU-improvement margin, and polygon-boundary attention supervision.

\subsection{Proposal Selector}

A separate selector learns from cached geometric-head outputs. For each of the selector candidates, the default action is the soft refinement:
\begin{equation}
    b_i^{(0)} = b_i + 0.75 \sigma \left( \frac{\Delta_i}{0.015} \right) (b_i^p - b_i)
\end{equation}
The selector may alter the default with the native box prediction or a refinement at 0.5, 0.75, 1, or 1.25 times the proposed displacement. It changes the default only when predicted utility minus the predicted harm exceeds a calibrated threshold. Otherwise, it keeps $b_i^{(0)}$ as the output, which may still be rescored.

\subsection{Postprocessing}

Various methods of lightweight refinement postprocessing are utilized and reported separately. Namely, entropy penalties, residual score calibration, geometric bias correction, duplicate damping, and refinement-strength adjustment as well as a bounded combination are utilized. Representative score transformations for entropy penalties and duplicate damping, respectively, are:
\begin{equation}
    s_i^E = s_i' e^{-\lambda_E H_i}, \qquad s_i^D = s_i' \exp \left[ -\lambda_D \left( \frac{(u_i - t)_+}{1 - t} \right)^2 (0.5 + 0.5 H_i) \right]
\end{equation}
where $u_i$ is the maximum overlap with higher confidence, same-class detection. Calibration adds a fitted feature-dependent correction to confidence logits. Geometric calibration corrects biases in the selected geometric head proposals. Strength adjustment aims to adjust the scale of edge transformations between the geometric head and predicted boxes, rather than just which edges are moved in which directions. Coefficients and operating points are determined prior to evaluation.

\section{Experimentation}

\subsection{Held-out Shape-reconstruction Error}

\textbf{Does entropy predict shape-reconstruction error?} We analyze correlation between shape entropy and modal and true-mask shape reconstruction error. The first of these models represents modal-cluster predictions, evaluated on a per-class basis. For datasets $D_{train}$ and $D_{val}$ respectively, both annotated with polygons of $K$ vertices, using the methodology for clustering and binning previously described, form shape clusters $(Y_1, ..., Y_C)$ and aspect ratio bins $(B_1, ..., B_M)$. Let the modal-cluster model given observed aspect ratio bin $B$ from dataset $D_{val}$ be $\hat{p}(B) = \arg \max_y \hat{P}_{train} (Y = y | R \in B)$. This model outputs the shape cluster with the highest proportion of training instances in aspect ratio bin $B$, once fitted on $D_{train}$ probabilities. Let $N_{val}$ be the number of instances evaluated in $D_{val}$. Define error $P_e = \frac{1}{N_{val}} \sum_{i=1}^{N_{val}} \textbf{1} \{ Y_i \ne \hat{Y}_i\}$ for model prediction $\hat{Y}_i$.

Similarly for shape reconstruction, let $\hat{p} (x)$ have the same definition. Then $\hat{M} = \arg \max_{m} \text{IoU}_{mask} (m, \hat{p}(x))$ where $m$ is taken across all polygon masks of $D_{train}$. For ground truth polygon $M$, define loss as $L = 1 - \frac{1}{N_{val}} \sum_{i=1}^{N_{val}} \text{IoU}_{mask} (M_i, \hat{M}_i)$.

We evaluate both $P_e$ and $L$ for each MS COCO 2017 \citep{lin2015} class with $K = 24$, evaluating correlation with $H(S | A_{box})$ calculated across the training dataset.

Results are shown in Figures \ref{figure1} and \ref{figure2}. We determine a very strong correlation between modal cluster loss ($r = 0.921$, $\rho = 0.886$, and permutation $p = 7.56 \cdot 10^{-33}$) and a strong, although slightly weaker, correlation with mask reconstruction ($r = 0.732$, $\rho = 0.803$, and permutation $p = 2.66 \cdot 10^{-14}$). Such correlations demonstrate promising correspondence between shape complexity and the derived heuristic, across both modal shape predictions and direct mask reconstruction error.

\begin{figure}[h]
    \centering
    \begin{minipage}{0.46\textwidth}
        \centering
        \includegraphics[width=\textwidth]{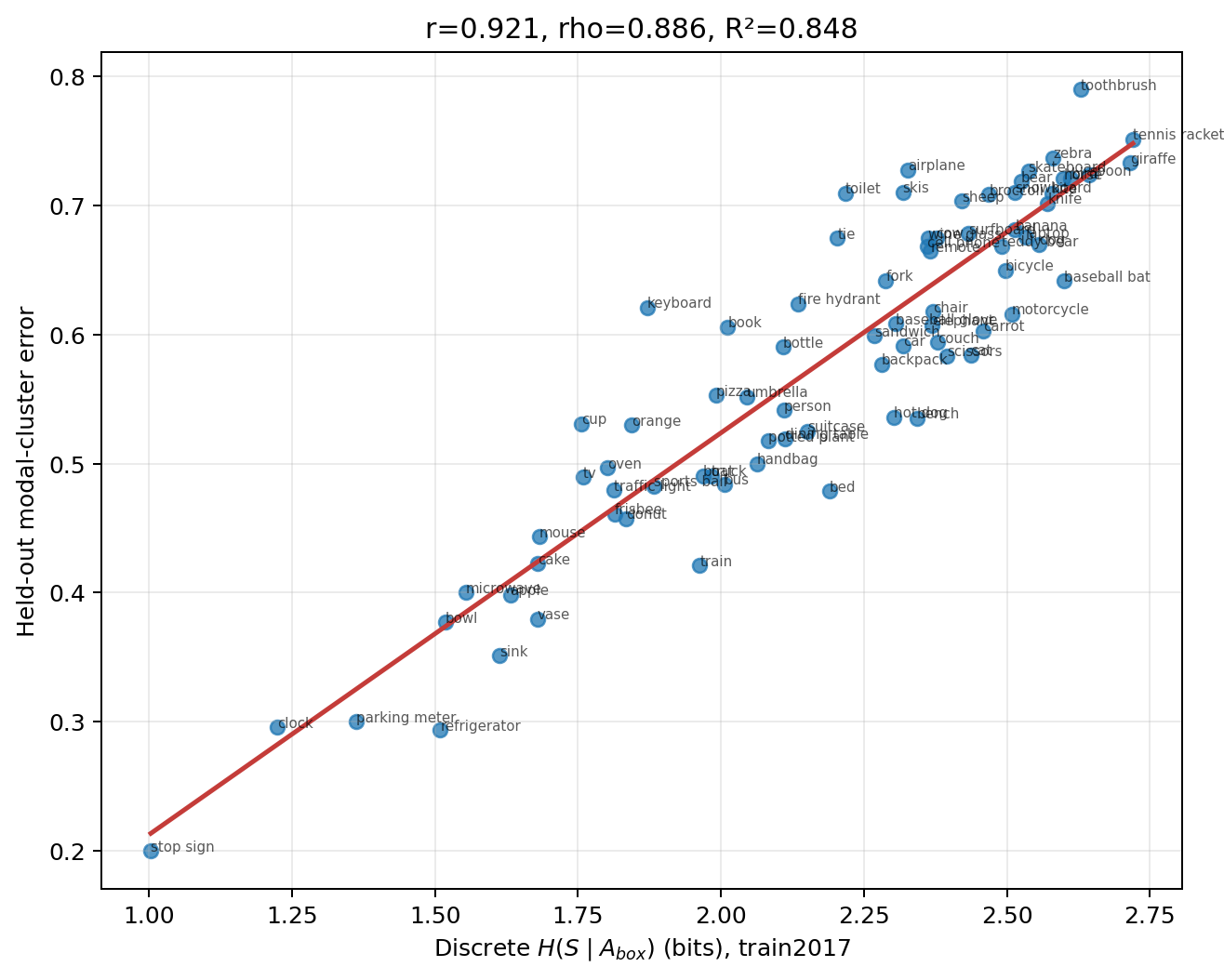}
        \caption{Modal cluster loss correlation with entropy: $r = 0.921$, $\rho = 0.886$, $R^2 = 0.848$.}
        \label{figure1}
    \end{minipage}
    \hfill
    \begin{minipage}{0.46\textwidth}
        \centering
        \includegraphics[width=\textwidth]{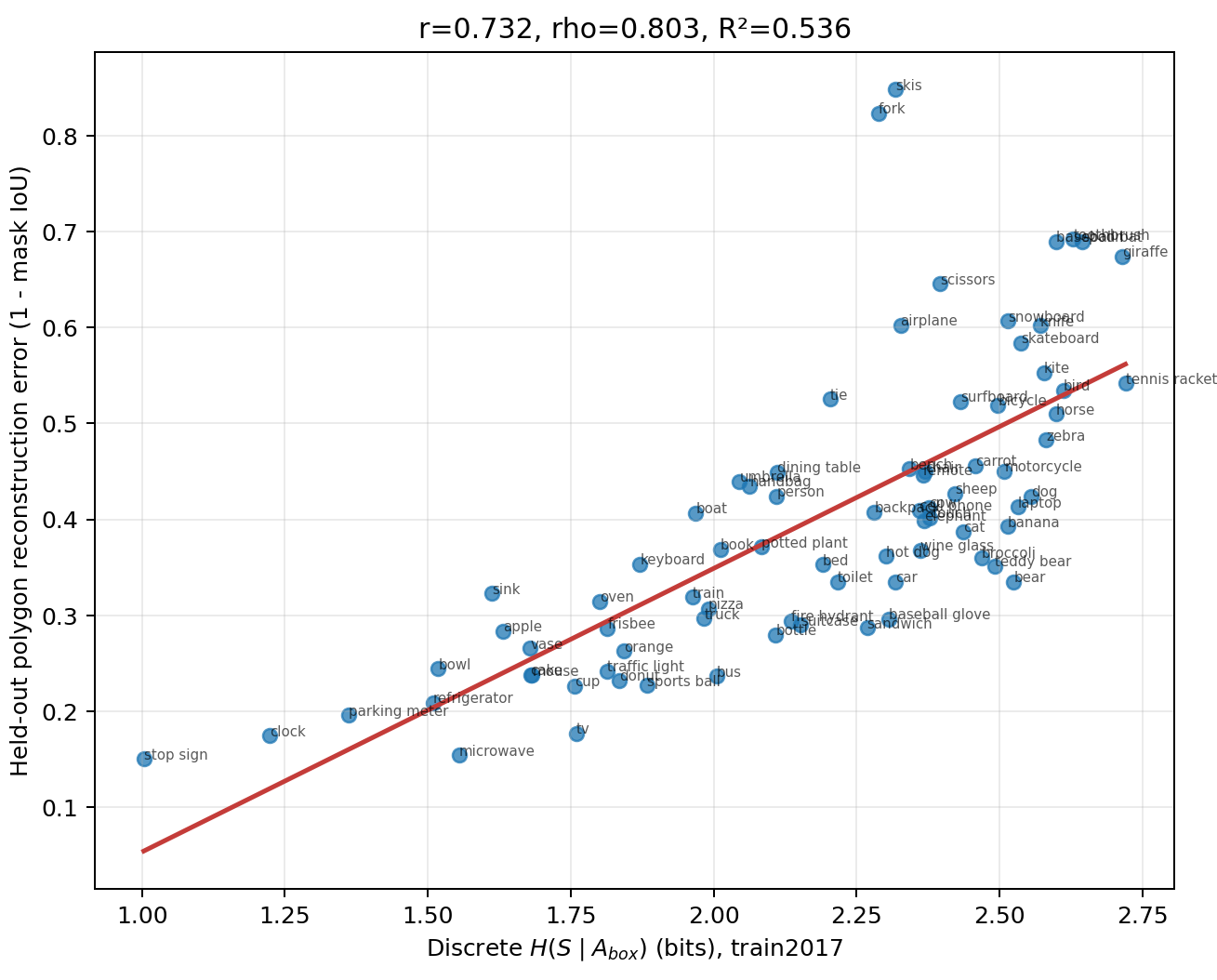}
        \caption{Mask reconstruction loss correlation with entropy: $r = 0.732$, $\rho = 0.803$, $R^2 = 0.536$.}
        \label{figure2}
    \end{minipage}
\end{figure}

\subsection{Mask-Derived Boxes and Detection Head Comparison}

\textbf{Does shape entropy predict the localization advantage of mask-derived boxes over native boxes for instance segmentation?} For class $c$, let $\Delta_c = AP_{mask} - AP_{box}$ where $AP_{mask}$ is derived from full polygon inference after converting polygons to their respective standard bounding boxes when comparing against ground truth boxes. All predictions come from the same instance segmentation model. We evaluate pretrained YOLO26n, s, m, l, and x \citep{jocher2026} respectively on the 20 Pascal SBD \citep{hariharan2011} classes, selected due to being out-of-distribution from the training images of YOLO26, the predominant one-stage CNN detector framework, as well as being well balanced for a dataset of its scale. We analyze correlation with $H(S | A_{box})$ evaluated per-class across the Pascal SBD training data. Outlying object classes are excluded from the analysis.

Results show that all YOLO26 variants had positive Spearman rank correlation with discrete entropy, with values $\rho = 0.553, 0.256, 0.461, 0.186$ and $0.319$ for n, s, m, l and x, respectively. All models had at most one outlying object class. When averaging $\Delta_c$ between models, no outliers remained and the resulting $\rho = 0.298$ signified positive correlation. However, when excluding the lowest correlation model, YOLO26l, from the average, one outlying class was formed and Spearman rho increased to $\rho = 0.540$. This correlation relies on excluding outliers, however still represents a promising result for the predictive nature of $H( S | A_{box})$.

\subsection{EAGLE-Enhanced Model Evaluation}

\textbf{How does EAGLE enhancement affect accuracy?} Using MS COCO 2017, we train three respective EAGLE enhancers using a 40\% polygon supervision budget for the refiner head. One refiner was trained for each of the YOLO26n, YOLO26m, and RT-DETRv2-S with ResNet-18 \citep{lv2024} frameworks, with the objective of robust evaluation across different computational cost CNN frameworks (YOLO26n and YOLO26m) alongside a transformer-based detector. The training methodology is described in Section 6.

As shown in Table \ref{sample-table}, the enhancer increases $\text{mAP}_{50-95}$ by roughly 0.64 points on YOLO26n, 0.27 on YOLO26m, and 0.06 on RT-DETRv2-S. When combined with the third phase of enhancer postprocessing, these values increase to 0.9, 0.45, and 0.16, respectively. Significant increases in $\text{mAP}_{50}$ are also shown, where the optimal bounded combination produces gains of 1.4, 0.8, and 0.29 respectively. A parameter count increase of 300,000 is observed uniformly across all methods and frameworks, which translates to a minimum and often negligible latency increase bounded by 0.1 milliseconds across all models. Most significantly, while the native YOLO26n instance segmentation box AP decreases by 0.5 points relative to standard object detection, our method provides a 0.9 point increase in favor of the EAGLE framework, effectively surpassing the prior polygon-supervised model by roughly 1.4 $\text{mAP}_{50-95}$ points after postprocessing.

Note that every other method combined with the EAGLE enhancer produced scores exceeding the baseline, and the reported variations are those with the most notable improvement.

\clearpage

\begin{table}[h]
\caption{Notable model performance for detection enhancement.}
\label{sample-table}
\begin{center}
\begin{tabular}{ll|lll|llll}
\toprule
Detector & Method & AP & $\text{mAP}_{50}$ & $\text{mAP}_{75}$ & Params (M) & Latency (ms)\\
\midrule
& Baseline & 40.25 & 55.80 & 43.65 & 2.4 & 2.3\\
\cmidrule{2-7}
YOLO26n& Enhancer & 40.89 & 56.77 & 44.17 & 2.7 & 2.3\\
\cmidrule{2-7}
& Duplicate Damping & 41.02 & 57.02 & \textbf{44.30} & 2.7 & 2.4\\
\cmidrule{2-7}
& Bounded Combination & \textbf{41.11} & \textbf{57.19} & 44.25 & 2.7 & 2.3\\
\midrule
& Baseline & 52.49 & 69.69 & 57.25 & 20.4 & 6.4\\
\cmidrule{2-7}
YOLO26m& Enhancer & 52.76 & 70.07 & 57.50 & 20.7 & 6.4\\
\cmidrule{2-7}
& Duplicate Damping & 52.88 & 70.28 & \textbf{57.64} & 20.7 & 6.4\\
\cmidrule{2-7}
& Bounded Combination & \textbf{52.94} & \textbf{70.48} & 57.56 & 20.7 & 6.4\\
\midrule
& Baseline & 48.10 & 65.07 & 52.09 & 20.0 & 6.8\\
\cmidrule{2-7}
RT-DETR& Enhancer & 48.16 & 65.17 & 52.17 & 20.3 & 6.9\\
\cmidrule{2-7}
& Duplicate Damping & 48.25 & \textbf{65.36} & 52.24 & 20.3 & 6.8\\
\cmidrule{2-7}
& Bounded Combination & \textbf{48.26} & \textbf{65.36} & \textbf{52.26} & 20.3 & 6.9\\
\bottomrule
\end{tabular}
\end{center}
\end{table}

\section{Conclusion}

In our work, we present a mathematical framework for quantifying shape entropy via non-Euclidean embedding in a Kendall Shape Space. We employ information-theoretic strategies and the derivation of a neighborhood-based Fano-type inequality in connecting localization error and Intersection over Union to discrete shape uncertainty. With this mathematical framework, we demonstrate empirically that post training development via an Entropy-aware Geometric Learning Enhancer, or EAGLE, is able to consistently teach model geometric features to a baseline localization model, even across only partially polygon-supervised annotations, significantly improving accuracy above SOTA frameworks. Alongside the most significant mAP increase with YOLO26n, positive accuracy gains across the proposed framework are shown to generalize to higher computational complexity CNN and transformer frameworks.

Further empirical analysis across benchmark datasets supports the validity of $H(S | A_{box})$ as a heuristic, showing significant correlation between shape reconstruction error and the proposed metric, alongside weaker but still relevant correlation with mask-derived and native box AP differences. Together, we present a combined mathematical and empirically-backed narrative regarding 2D localization and geometric feature learning: while polygon-supervised localization models that directly learn precise features alongside broader localization often suffer from interference, this interference and shape entropy can be measured directly via dataset-level heuristics. That same entropy can be utilized for targeted geometric learning via lightweight auxiliary heads, and the resulting accuracy significantly improves baseline SOTA learning models, with minimal inference overhead due to low parameter refinement. Together, refinement methods are applicable in data-limited settings, and function on partially polygon-supervised tasks.

\bibliography{references}
\bibliographystyle{bibstyle}

\appendix

\section{Proof of Lemma 2}

In forming Theorem 1, we rely on the following lemma:

\begingroup

\renewcommand{\thelemma}{2}
\begin{lemma}
    $C(r) = \frac{\sum_i \exp \left( - \frac{(r - r_i)^2}{2h^2} \right) (v_i - \mu_r) (v_i - \mu_r)^T}{\exp \left( - \frac{(r - r_i)^2}{2h^2} \right)}$ where $h = 0.9 \sigma N^{- \frac{1}{5}}$, pairwise aspect ratios are $\{r_i\}_{i=1}^N$ corresponding to masks $M_i$, and $\mu_r = \frac{\sum_i K_h (r - r_i) v_i}{\sum_i K_h (r - r_i)}$ for Gaussian Kernel $K_h$.
\end{lemma}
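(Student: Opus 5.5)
The plan is to read Lemma 2 as an estimator statement: $C(r)$ is the Nadaraya--Watson (kernel-weighted) estimate of the conditional covariance $\mathrm{Cov}(V \mid R = r)$, built from the tangent vectors $v_i = Log_\mu(S_i)$ given by Lemma 1. I will also read the denominator as the sum $\sum_i \exp\!\left(-\frac{(r-r_i)^2}{2h^2}\right)$; the displayed formula omits the summation, and without it the expression does not parse. Lemma 1 places every $v_i$ in the Euclidean tangent space $T_\mu\Sigma_2^K \cong \mathbb{R}^{2K-4}$, so ordinary second moments of $V$ make sense there.

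\textbf{Step 1: the conditional mean.} I will write $\mathrm{Cov}(V\mid R=r) = \mathbb{E}[VV^T\mid R=r] - m(r)m(r)^T$, where $m(r) = \mathbb{E}[V \mid R=r]$. Conditional expectations of this kind are ratios of joint and marginal densities, $\mathbb{E}[g(V)\mid R=r] = \int g(v)\,p(v,r)\,dv / p(r)$. I will estimate the joint density by a kernel estimate that is Gaussian with bandwidth $h$ in $r$ and empirical (Dirac) in $v$, and the marginal $p(r)$ by the matching one-dimensional KDE. The normalizing factors $\frac{1}{Nh\sqrt{2\pi}}$ cancel in the ratio. This gives $\mathbb{E}[g(V)\mid R=r] \approx \sum_i w_i(r)\,g(v_i)$ with weights $w_i(r) = K_h(r-r_i)/\sum_j K_h(r-r_j)$. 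Taking $g(v)=v$ yields exactly $\mu_r$.

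\textbf{Step 2: the conditional covariance.} Applying the same weights with $g(v) = (v-\mu_r)(v-\mu_r)^T$, and using $\sum_i w_i(r)(v_i-\mu_r) = 0$, gives $\sum_i w_i(r)\,v_iv_i^T - \mu_r\mu_r^T = \sum_i w_i(r)(v_i-\mu_r)(v_i-\mu_r)^T$. This is the stated formula once the Gaussian constants cancel between numerator and denominator.

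\textbf{Step 3: the bandwidth.} The choice $h = 0.9\,\sigma N^{-1/5}$ is Silverman's rule of thumb, with $\sigma$ the standard deviation of the observed $r_i$. I will justify it by citing the AMISE-optimal $N^{-1/5}$ rate for a one-dimensional Gaussian kernel under a near-normal reference density for the log aspect ratios. Consistency of the estimator, $C(r) \to \mathrm{Cov}(V\mid R=r)$ as $N\to\infty$, then follows from standard Nadaraya--Watson results, since $h\to 0$ and $Nh\to\infty$.

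\textbf{Main obstacle.} The statement is a definition-by-estimator, not an identity, so the proof can only show that $C(r)$ is the natural plug-in, consistent estimate of $C(r)=\mathrm{Cov}(V\mid R=r)$. Two points need honest handling. First, the denominator has to be read as a sum, as above. Second, consistency requires regularity assumptions: $p(r)$ must be continuous and positive at $r$, and the conditional second moments must be continuous in $r$. I will state these assumptions explicitly rather than try to prove exact equality.
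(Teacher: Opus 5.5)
Your proposal is correct and takes essentially the paper's route: the paper also writes $C(r)=\mathbb{E}[(V-\mu_r)(V-\mu_r)^T\mid R=r]$ and replaces the conditional expectations by Gaussian-kernel weighted averages with Silverman's bandwidth. It then cancels the $\frac{1}{\sqrt{2\pi}h}$ factors and ends with $\sum_i \exp\bigl(-\frac{(r-r_i)^2}{2h^2}\bigr)$ in the denominator, exactly as you read it. The paper simply asserts the kernel-weighted form, so your derivation of the weights from a joint/marginal KDE ratio, the centering identity, and the consistency remark are extra justification rather than a different argument.
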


\endgroup

\begin{proof}
    First write:
    \begin{equation}C(r) = Cov( V | R = r) = \mathbb{E}[(V - \mathbb{E}[V | R = r]) (V - \mathbb{E}[V | R = r])^T | R = r]\end{equation}
    Under $\mu_r = \mathbb{E}[V | R = r]$, rewrite as:
    \begin{equation}C(r) = \mathbb{E} [ (V - \mu_r) (V - \mu_r)^T | R = r]\end{equation}

    Have $V' = V - \mu_r$ such that:
    \begin{equation}C(r) = \mathbb{E} [(V')(V')^T | R = r]\end{equation}
    \begin{equation}= \frac{\sum_i K_h (r - r_i) (v_i - \mu_r) (v_i - \mu_r)^T}{\sum_i K_h (r - r_i)}\end{equation}
    when using Gaussian Kernel $K_h$ for a smooth distribution of aspect ratios. It further implies:
    \begin{equation}\mu_r = \frac{\sum_i K_h (r - r_i) v_i}{\sum_i K_h (r - r_i)}\end{equation}
    Write:
    \begin{equation}K_h (r - r_i) = \frac{1}{\sqrt{2 \pi}h} \exp \left( - \frac{(r - r_i)^2}{2h^2} \right)\end{equation}
    Using $h = 0.9 \sigma N^{-\frac{1}{5}}$ as the bandwidth based on the formula established in \citet{silverman1986}. The value $\sigma$ represents the standard deviation of the distribution of aspect ratios. More computational complex methods of estimation across other bandwidth calculations may be used in place of this. From here, write that:
    \begin{equation}C(r) = \frac{\sum_i \frac{1}{\sqrt{2 \pi}h} \exp \left( - \frac{(r - r_i)^2}{2h^2} \right) (v_i - \mu_r) (v_i - \mu_r)^T}{\sum_i \frac{1}{\sqrt{2 \pi}h} \exp \left( - \frac{(r - r_i)^2}{2h^2} \right)} = \frac{\sum_i \exp \left( - \frac{(r - r_i)^2}{2h^2} \right) (v_i - \mu_r) (v_i - \mu_r)^T}{\sum_i \exp \left( - \frac{(r - r_i)^2}{2h^2} \right)}\end{equation}
\end{proof}

\section{Experimental Methodology}

\subsection{Shape Reconstruction Correlation Specifications}

The reconstruction models were fitted on 3,000 instances from MS COCO 2017 training data per-class to prevent non-uniform fitting. The Intel Xeon CPU was used for the experimentation. Per-category sampling random seed was calculated as 17 + COCO ID where the ID is the numbered COCO class of integers $0, 1, ..., 79$.

\subsection{Mask-Derived Boxes and Detection Head Comparison Specifications}

For the mask-derived and detection head comparison, we used the standard Pascal SBD training split to calculate per-class $H(S | A_{box})$, while the validation split was used for model evaluation. Training masks were resampled to 32 vertices, and up to 1500 instances per class were used. Clustering and binning were done via the described methodology including Knuth's rule for binning and Tibshirani's gap statistic. The random seed used was 17.

We evaluated the corresponding native box and predicted mask pair using the same confidence threshold and class prediction. Inferences used 640-pixel images and confidence threshold 0.001. Outlier sensitivity used the standard 1.5-IQR statistical definition. The GPU used for evaluating model inference was the NVIDIA RTX PRO 6000 Blackwell Server Edition.

\subsection{Training Hyperparameters for EAGLE}

For all of the following described training, a GPU identical to that for the mask-derived box and object detection evaluation was used. The selected random seed was 30 for all evaluations.

The full 118,287 images of MS COCO 2017 were used during training. All 5000 validation images of the same dataset were further used across all models to evaluate against the baseline. Image size was standardized at 640 by 640 pixels. The batch size used was 128.

Each image received 128 refined proposals, with ROI grid of 9 by 9. Training consisted of 60 total epochs, of which the first 25 were geometry, the next 20 were quality, and the last 15 were joint training as described in Appendix E for epoch scheduling. Model progress is evaluated at epochs 10, 20, ..., 60 and the highest performing epoch is saved for use with the selector and postprocessing. The AdamW \citep{loshchilov2018} optimizer is used across training with initial learning rate $7 \times 10^{-4}$ and weight decay $10^{-4}$. The head was trained with 2 warmup epochs and cosine decay converts learning rate to 0.03 that of the initial.

The detector remains frozen but runs inference on each training batch at each epoch. An HSV augmentation of (0.015, 0.7, 0.4) and horizontal flip with probability 0.5 was applied to each image of the training batch. The training used exponential moving average with decay $0.9999$ and $\tau = 2000$.

Only 40\% of training polygon instances were considered, and the rest were discarded and converted to bounding boxes. To fit entropy, 25\% of these polygon instances were separated and used.

For the selector, an AdamW learning rate of $4 \times 10^{-4}$ was used with 4,096 candidate predictions analyzed per optimizer step and 6 epochs. Weight decay was $10^{-3}$ with cosine learning rate decay.

For postprocessing, there were 8,000 fitting, 1,000 selection, and 2,000 calibration images. Score-calibration coefficients were fitted across six epochs with Adam at learning rate 0.01 and L2 coefficient of 0.001. Entropy-penalty strengths are $\{0.05, 0.15, 0.35, 0.7\}$, while score-calibration and geometric-bias strengths are $\{0.25, 0.5, 1\}$. The duplicate damping strengths were $\{0.25, 0.75\}$ at overlap thresholds of $\{0.7, 0.85\}$. The refinement strength adjustments are chosen from $\{-1, -0.5, 0.5, 1\}$. Calibrations are locked across the calibration images and then full validation was performed on MS COCO 2017 validation dataset.

\section{Proof of Localization Theorem}

We fully prove the following lemma used in Section 5 for the localization theorem:

\begingroup
\renewcommand{\thelemma}{3}
\begin{lemma}
    Under the conditions:

    \begin{enumerate}
        \item The alphabet $\mathcal{A}$ of $A$ satisfies $1 < | \mathcal{A} | < \infty$
        \item Functions $f, g$ are deterministic
        \item $d(a, b) \ge 0$ for all $a,b$
        \item There exists threshold $t > 0$ such that:
        \begin{equation} N_t = \sup_{\hat{b} \in \text{range}(g(B))} | \{a \in \mathcal{A} : d(f(a), \hat{b}) \le t\} | < | \mathcal{A} | \end{equation}
    \end{enumerate}
    
    we have: $H(A | B) \le \psi (\mathbb{E}[d(f(A), g(B))]) + \eta$ for strictly increasing function $\psi (z) = \left( \frac{\log_2 |\mathcal{A}| - \log_2 N_t}{t} \right) z$ and $\eta = 1 + \log_2 N_t$.
\end{lemma}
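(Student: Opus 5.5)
The plan is to adapt the standard Fano argument, with an error indicator defined by the threshold $t$. Set $\hat{b} = g(B)$ and let $E = \mathbf{1}\{d(f(A), \hat{b}) > t\}$, so $E = 0$ exactly when $A$ lies in the neighbourhood $\mathcal{N}(\hat{b}) = \{a \in \mathcal{A} : d(f(a), \hat{b}) \le t\}$. By condition (2), $\hat{b}$ is a deterministic function of $B$, and $E$ is a deterministic function of $(A, B)$. The chain rule then gives
\begin{equation}
H(A \mid B) \le H(A, E \mid B) = H(E \mid B) + H(A \mid E, B) \le 1 + H(A \mid E, B),
\end{equation}
since $E$ is binary.

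The next step splits the conditional term on the value of $E$. On $\{E = 0\}$, given $B$, the variable $A$ is confined to $\mathcal{N}(g(B))$, which by condition (4) has at most $N_t$ elements. Hence $H(A \mid E = 0, B) \le \log_2 N_t$. On $\{E = 1\}$ we use the trivial bound $\log_2|\mathcal{A}|$; one could sharpen this to $\log_2(|\mathcal{A}| - 1)$, but that is not needed. Writing $P_t = \Pr(E = 1)$, this yields
\begin{equation}
H(A \mid B) \le 1 + (1 - P_t)\log_2 N_t + P_t \log_2|\mathcal{A}| = 1 + \log_2 N_t + P_t\left(\log_2|\mathcal{A}| - \log_2 N_t\right).
\end{equation}
Condition (4) gives $N_t < |\mathcal{A}|$, so the coefficient of $P_t$ is strictly positive. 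Condition (1) ensures every quantity is finite and the alphabet is nontrivial; we need $N_t \ge 1$ for the logarithm, and if $N_t = 0$ we set $\eta$ accordingly or note that the neighbourhood set is then empty.

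Finally, we convert $P_t$ into the expected distortion using Markov's inequality. Since $d \ge 0$ by condition (3), $P_t = \Pr(d(f(A), g(B)) > t) \le \mathbb{E}[d(f(A), g(B))]/t$. Substituting this gives exactly $H(A \mid B) \le \psi(\mathbb{E}[d]) + \eta$ with the stated $\psi$ and $\eta$. The function $\psi$ is linear with positive slope, so it is strictly increasing, and that slope is where we use $N_t < |\mathcal{A}|$.

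The main obstacle is making the $E = 0$ case rigorous. The bound must hold pointwise in $B = b$, using that the neighbourhood depends only on $g(b)$ and therefore has size at most $N_t$ uniformly. For continuous $B$ this means writing conditional entropy as an average over $b$, and handling the events of probability zero where $\Pr(E = 0 \mid B = b) = 0$.
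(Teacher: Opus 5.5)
Your proposal is correct and follows the paper's proof essentially step for step. It uses the same threshold error indicator $E = \mathbf{1}\{d(f(A),g(B)) > t\}$, the chain-rule bound $H(A \mid B) \le 1 + H(A \mid E, B)$, the split into $\log_2 N_t$ on $\{E=0\}$ and $\log_2|\mathcal{A}|$ on $\{E=1\}$, and Markov's inequality to replace $\Pr(E=1)$ by $\mathbb{E}[d]/t$. The only place you are vaguer than the paper is the degenerate case $N_t = 0$, where the stated $\psi$ and $\eta$ are both undefined. The paper notes $\Pr(E=1) = 1 \le \mathbb{E}[d]/t$ there and replaces both constants with $\psi(z) = \frac{\log_2|\mathcal{A}|}{t} z$ and $\eta = 1$; adjusting $\eta$ alone is not enough.
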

\endgroup

\begin{proof}
    Let $E$ be the error event $E = \textbf{1} \{d(f(A), g(B)) > t\}$. Write $H(A, E | B)$ as follows:
    \begin{equation}
        H(A, E | B) = H(A | B) + H(E | A, B) = H(E | B) + H(A | E, B)
    \end{equation}
    As $f, g$ are deterministic, error can be determined uniquely from information of both $A$ and $B$, implying $H(E | A, B) = 0$. As $H(E | B) \le H(E)$, we have:
    \begin{equation}
        H(A | B) + 0 \le H(E) + H(A | E, B)
    \end{equation}
    For simplicity let $D = d(f(A), g(B))$ and let probability $P = \mathbb{P} (E = 1) = \mathbb{P} (D > t)$. For binary event $E$, $H(E) = H_b (P) \le 1$. Expand $H(A | E, B)$ through cases on the possible values of $E$ with:
    \begin{equation}
        H(A | E, B) = P H(A | E = 1, B) + (1 - P) H(A | E = 0, B)
    \end{equation}
    If $E = 1$, the entropy $H(A | E = 1, B)$ is at most the base 2 logarithm of the number of elements in the space written as $\log_2 | \mathcal{A} |$, given that this value is discrete. If $E = 0$, the entropy is at most the base 2 logarithm of all possible elements in the neighborhood within $t$ distance of $g(B)$, implying $H(A | E = 0, B) \le \log_2 N_t$. Suppose $N_t \ge 1$ for now such that the operation $N_t$ is defined. Substitute with:
    \begin{equation}
        H(A | B) \le 1 + P \log_2 |\mathcal{A}| + (1 - P) \log_2 N_t
    \end{equation}
    \begin{equation}
        H(A | B) \le 1 + \log_2 N_t + P(\log_2 |\mathcal{A}| - \log_2 N_t)
    \end{equation}

    Based on Markov's Inequality with $P = \mathbb{P} (D >t) \le \frac{\mathbb{E}[D]}{t}$, write:

    \[H(A | B) \le 1 + \log_2 N_t + \left( \frac{\log_2 |\mathcal{A}| - \log_2 N_t}{t} \right) \mathbb{E}[D]\]

    Writing $\psi(z) = \left( \frac{\log_2 | \mathcal{A} | - \log_2 N_t}{t}z \right)$ and $\eta = 1 + \log_2 N_t$ gives:
    \begin{equation}
        H(A | B) \le \psi(\mathbb{E}[D]) + \eta
    \end{equation}
    \begin{equation}
        H(A | B) \le \psi(\mathbb{E}[d(f(A), g(B))]) + \eta
    \end{equation}
    Note how $\psi(z)$ is strictly increasing as $\frac{\log_2 |\mathcal{A}| - \log_2 N_t}{t}$ is a positive constant based on $N_t < | \mathcal{A} |$ guaranteed based on the fourth condition of the lemma. This concludes the main case $N_t > 0$.

    For the edge case $N_t = 0$, this implies $P = 1$ as no element is within the neighborhood of distance $t$ from $g(B)$, which implies:
    \begin{equation}
        H(A | B) \le 1 + P \log_2 | \mathcal{A}|
    \end{equation}
    \begin{equation}
        H(A | B) \le 1 + \log_2 |\mathcal{A}| \frac{\mathbb{E} [d(f(A), g(B))]}{t}
    \end{equation}
    The RHS is strictly increasing with respect to $\mathbb{E} [ d(f(A), g(B))]$, which satisfies the lemma when replacing $\psi (z) = \frac{\log_2 | \mathcal{A} |}{t}z$ and $\eta = 1$.
\end{proof}

Using this we now prove the theorem:

\begingroup
\renewcommand{\thetheorem}{3}
\begin{theorem}
    $\mathbb{E}[\text{IoU}_{box}(X_B, \hat{B}_\theta)] \le 1 - \psi^{-1}(H(Y | \hat{B}_\theta) - \eta)$ where $IoU_{box}$ represents the box IoU achieved by model $f_\theta (I)$. Qualitatively, this implies an upper bound on model performance that is strictly decreasing with respect to the remaining ambiguity about object geometry $Y$ given the prediction.
\end{theorem}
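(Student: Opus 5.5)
The plan is to apply Lemma 3 directly with a suitable choice of variables and distance, and then invert the linear function $\psi$. I would take $A = Y_B = (X_B, Y)$, with alphabet $\mathcal{A} = \mathcal{X} \times \{Y_1, \dots, Y_C\}$. This alphabet is finite with more than one element, because $|\mathcal{X}| < \infty$ and $C \ge 1$, and we need at least two joint values. I would take $B = I$ and $g = f_\theta$, so that $g(B) = \hat{B}_\theta$. For $f$ I would use the projection $f(X_B, Y) = X_B$, and for the distance $d = d_B = 1 - \text{IoU}_{box} \ge 0$. Then $f$ and $g$ are deterministic and $d$ is nonnegative, so conditions (1)--(3) of Lemma 3 hold. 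Condition (4) I would impose as a standing assumption: there is a threshold $t \in (0,1]$ such that no prediction lies within IoU-distance $t$ of every joint label, i.e.\ $N_t < |\mathcal{A}|$. This is exactly where the constants $\psi$ and $\eta$ in the statement come from.

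Lemma 3 then gives
\[
H(Y_B \mid I) \le \psi\big(\mathbb{E}[d_B(X_B, \hat{B}_\theta)]\big) + \eta .
\]
The next step is to move from $H(Y_B \mid I)$ to $H(Y \mid \hat{B}_\theta)$. I would do this in two moves. First, $H(Y \mid \hat{B}_\theta) \le H(Y_B \mid \hat{B}_\theta)$, since $Y$ is a function of $Y_B$. Second, I need to compare $H(Y_B \mid \hat{B}_\theta)$ with $H(Y_B \mid I)$. Conditioning on $I$ reduces entropy relative to conditioning on $\hat{B}_\theta = f_\theta(I)$, so this inequality goes the wrong way for a direct substitution.

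This is the step I expect to be the main obstacle. The cleanest fix is to apply Lemma 3 with $B = \hat{B}_\theta$ itself and $g$ the identity, which is still deterministic. The proof of Lemma 3 only uses that $E$ is determined by $(A, B)$, and this holds here because $d_B(X_B, \hat{B}_\theta)$ is a function of $(Y_B, \hat{B}_\theta)$. With this choice the bound reads $H(Y_B \mid \hat{B}_\theta) \le \psi(\mathbb{E}[d_B]) + \eta$. Combined with $H(Y \mid \hat{B}_\theta) \le H(Y_B \mid \hat{B}_\theta)$, this yields
\[
H(Y \mid \hat{B}_\theta) - \eta \le \psi\big(\mathbb{E}[1 - \text{IoU}_{box}(X_B, \hat{B}_\theta)]\big).
\]

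Finally, $\psi(z) = \kappa z$ with $\kappa = (\log_2|\mathcal{A}| - \log_2 N_t)/t > 0$, or $\kappa = \log_2|\mathcal{A}|/t$ in the edge case $N_t = 0$. So $\psi$ is a strictly increasing bijection of $\mathbb{R}$, and $\psi^{-1}(w) = w/\kappa$ is also increasing. Applying $\psi^{-1}$ preserves the inequality, giving $\psi^{-1}(H(Y \mid \hat{B}_\theta) - \eta) \le 1 - \mathbb{E}[\text{IoU}_{box}]$. Rearranging gives the claimed bound. The qualitative remark then follows because $w \mapsto 1 - \psi^{-1}(w - \eta)$ is strictly decreasing in $w = H(Y \mid \hat{B}_\theta)$.

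In writing this up I would note explicitly that $\psi$ and $\eta$ here are those of Lemma 3 for the alphabet $\mathcal{A} = \mathcal{X} \times \{Y_1, \dots, Y_C\}$ and the chosen threshold $t$.
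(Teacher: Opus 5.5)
Your route is the paper's: $A=Y_B=(X_B,Y)$, $B=\hat{B}_\theta$ with $g$ the identity and $f$ the projection onto $X_B$, then $H(Y\mid\hat{B}_\theta)\le H(Y_B\mid\hat{B}_\theta)$, then inversion of the linear map $\psi$. You were right to discard $B=I$: the inequality $H(Y_B\mid I)\le H(Y_B\mid\hat{B}_\theta)$ points the wrong way.

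The one real shortfall is condition (4) of Lemma 3. You adopt it as a standing assumption, whereas the paper derives it. As written, your theorem depends on a hypothesis you never check.

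The only input needed is that $\mathcal{X}$ contains at least two distinct boxes. This is the paper's Assumption 3, and it is also what actually gives $|\mathcal{A}|>1$; your appeal to $C\ge 1$ does not. The argument then goes as follows:
\begin{enumerate}
\item Let $\delta>0$ be the minimum of $1-\text{IoU}_{box}$ over pairs of distinct boxes in the finite set $\mathcal{X}$, and take $t\in(0,\delta/2)$.
\item The Jaccard distance $1-\text{IoU}_{box}$ satisfies the triangle inequality. So no prediction $\hat{b}$ can lie within $t$ of two different boxes, since that would force $\delta\le 2t<\delta$.
\item Hence every set $\{a\in\mathcal{A}: d_B(f(a),\hat{b})\le t\}$ consists of labels sharing a single box. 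With your product alphabet this gives $N_t\le C<|\mathcal{X}|\,C=|\mathcal{A}|$.
\end{enumerate}

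Inserting this step removes your extra hypothesis. It also gives the explicit bound $\eta\le 1+\log_2 C$ for this choice of $t$.
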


\begin{proof}
    We start by showing that the intended substitution into Lemma 3 satisfies its conditions, implying $H(Y_B | \hat{B}_\theta) \le \psi ( \mathbb{E}[ d_B (X_B, \hat{B}_\theta)]) + \eta$. Setting $A = Y_B$, we know that the alphabet size $| \mathcal{A} |$ is at most the upper bound on the number of potential states of $X_B$, multiplied by the number of potential states of $Y$. For some maximum image size $w$ by $h$ pixels, the former quantity is $w^2h^2$. As $Y$ is chosen from $C$ discretized shapes, any observed shape has at most $C$ possibilities corresponding to $X_B$. Therefore, $| \mathcal{A} | \le Cw^2h^2$ and is finite. Showing $| \mathcal{A} | > 1$ requires assumption.
    \begin{assumption}
        There are at least 2 distinct objects $X_B$, which implies $| \mathcal{A} | > 1$.
    \end{assumption}
    This is an extremely mild assumption, as any localization task conditioned on more than one possible bounding box input or output satisfies it. Together, the upper bound and assumption give $1 < | \mathcal{A} | < \infty$.

    Set $B = \hat{B}_\theta$. We further wish to define $f, g$ to be deterministic functions given $Y_B, \hat{B}_\theta$ for the use of Lemma 3. Setting $f(Y_B) = X_B$ and $g(\hat{B}_\theta) = \hat{B}_\theta$ achieves this condition. The condition $d(a, b) \ge 0$ where $d_B (X_B, \hat{B}_\theta) = 1 - \text{IoU}_{box} (X_B, \hat{B}_\theta )$ follows as $\text{IoU}_{box} (X_B, \hat{B}_\theta ) \in [0, 1]$, implying $d_B (X_B, \hat{B}_\theta ) = 1 - \text{IoU}_{box} (X_B, \hat{B}_\theta ) \in [0, 1]$.

    The last condition on the existence of $t > 0$ such that $N_t = \sup_{\hat{b} \in range(g(B))} |\{a \in \mathcal{A} : d(f(a), \hat{b}) \le t\} | < | \mathcal{A} |$ follows from geometric properties of Jaccard Distance. With the previous assumption, we assumed there are at least 2 distinct realized bounding boxes. Let $\delta = \min_{Y_B^i, Y_B^j \in \mathcal{A}, f(Y_B^i) \ne f(Y_B^j)} d_B (f(Y_B^i), f(Y_B^j))$ represent the minimum Jaccard Distance between any two distinct realized bounding boxes. Choose $t \in (0, \frac{\delta}{2})$. For all bounding box predictions $\hat{B}_\theta$, we have with use of the triangle inequality for Jaccard Distance that $d_B (f(Y_B^i), f(Y_B^j)) = \delta \le d_B (f(Y_B^i), \hat{B}_\theta) + d_B (f(Y_B^j), \hat{B}_\theta)$. Suppose both $Y_B^i$ and $Y_B^j$ were in neighborhood $N_t$. This implies $d_B (f(Y_B^i), \hat{B}_\theta), d_B (f(Y_B^j), \hat{B}_\theta) \le t < \frac{\delta}{2}$. It would then follow that $\delta \le d_B (f(Y_B^i), \hat{B}_\theta) + d_B (f(Y_B^j), \hat{B}_\theta) \le 2t < \delta$. This is a contradiction, implying at least one of $Y_B^i$ and $Y_B^j$ must be outside of the neighborhood $\{a \in \mathcal{A} : d(f(a), \hat{B}_\theta) \le t\}\}$, meaning $N_t < | \mathcal{A} |$ as the neighborhood is missing at least one element from the alphabet. As this same construction applies regardless of $\hat{B}_\theta$, we have $N_t < | \mathcal{A} |$ which satisfies the last condition.

    By substitution, we have:
    \begin{equation}
        H(Y_B | \hat{B}_\theta ) \le \psi (\mathbb{E}[d_B(f(Y_B), g(\hat{B}_\theta))]) + \eta = \psi (\mathbb{E}[d_B (X_B, \hat{B}_\theta )]) + \eta
    \end{equation}
    As $Y_B = (X_B, Y)$ is a joint variable including $Y$, it follows that $H(Y | \hat{B}_\theta) \le H(Y_B | \hat{B}_\theta)$. We write:
    \begin{equation}
        H(Y | \hat{B}_\theta) \le H(Y_B | \hat{B}_\theta ) \le \psi (\mathbb{E}[d_B (X_B, \hat{B}_\theta )]) + \eta
    \end{equation}
    \begin{equation}
        H(Y | \hat{B}_\theta ) - \eta \le \psi (\mathbb{E}[d_B (X_B, \hat{B}_\theta )])
    \end{equation}
    \begin{equation}
        \begin{aligned}
        \psi^{-1} (H(Y | \hat{B}_\theta ) - \eta) \le \mathbb{E}[d_B(X_B, \hat{B}_\theta)] \\
        = \mathbb{E}[1 - \text{IoU}_{box} (X_B, \hat{B}_\theta)] \\
        = 1 - \mathbb{E}[ \text{IoU}_{box} (X_B, \hat{B}_\theta)]
        \end{aligned}
    \end{equation}
    Rearranging Equation 31 produces the following:
    \begin{equation}
        \mathbb{E}[\text{IoU}_{box} (X_B, \hat{B}_\theta)] \le 1 - \psi^{-1} (H(Y | \hat{B}_\theta) - \eta)
    \end{equation}
    Note that as $\psi$ is strictly increasing, so is its inverse $\psi^{-1}$, and therefore the RHS is strictly decreasing with respect to $H(Y | \hat{B}_\theta)$ for some fixed $\eta$.
\end{proof}
\endgroup

\section{Expanded Empirical Results}

\subsection{Mask-derived and Native Box Correlation}

The following figures represent per-model correlation with $H(S | A_{box})$.

\begin{figure}[h]
\begin{subfigure}[t]{0.3\textwidth}
    \centering
    \includegraphics[width=\textwidth]{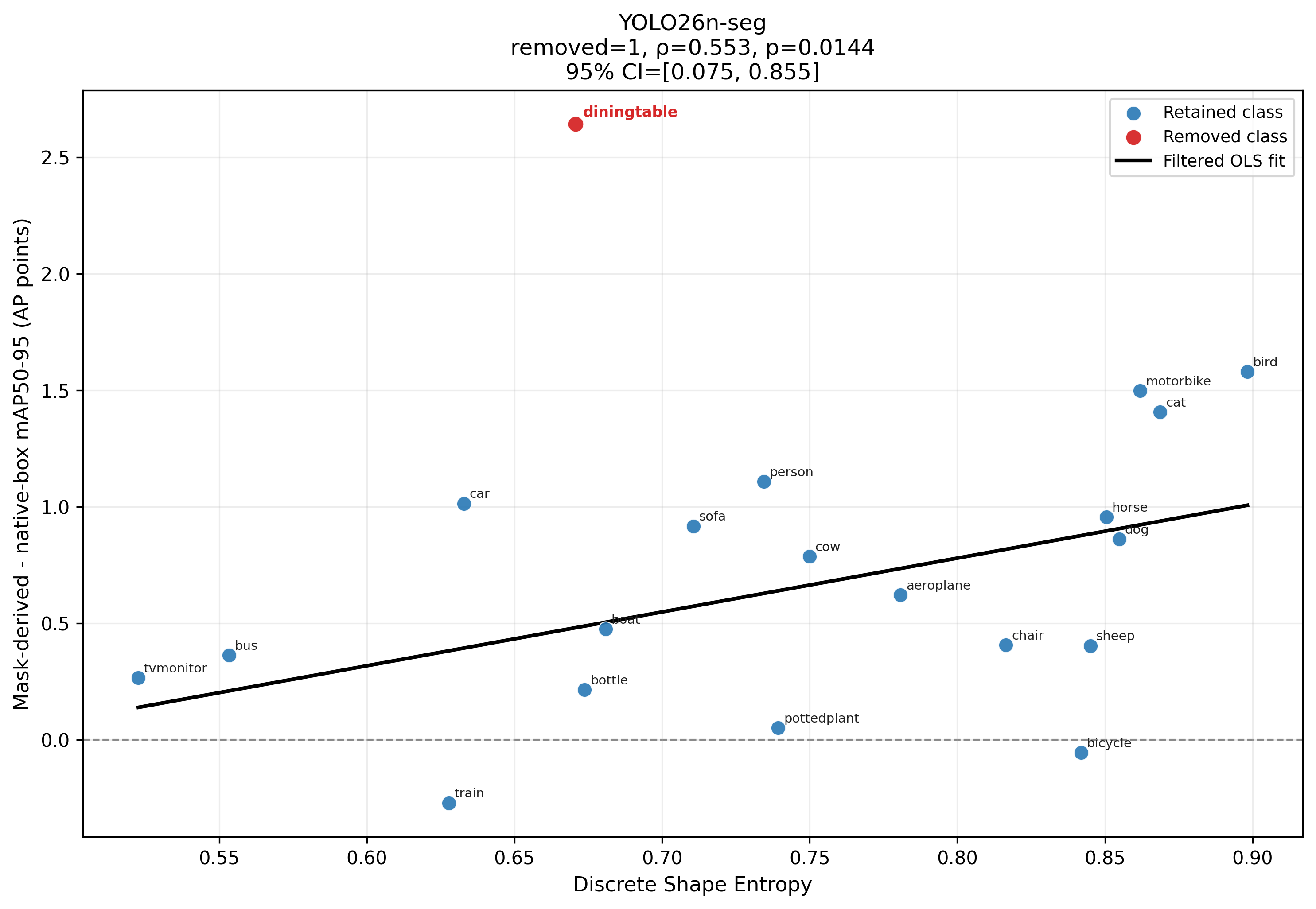}
    \caption{Correlation for YOLO26n}
\end{subfigure}
\hfill
\begin{subfigure}[t]{0.3\textwidth}
    \centering
    \includegraphics[width=\textwidth]{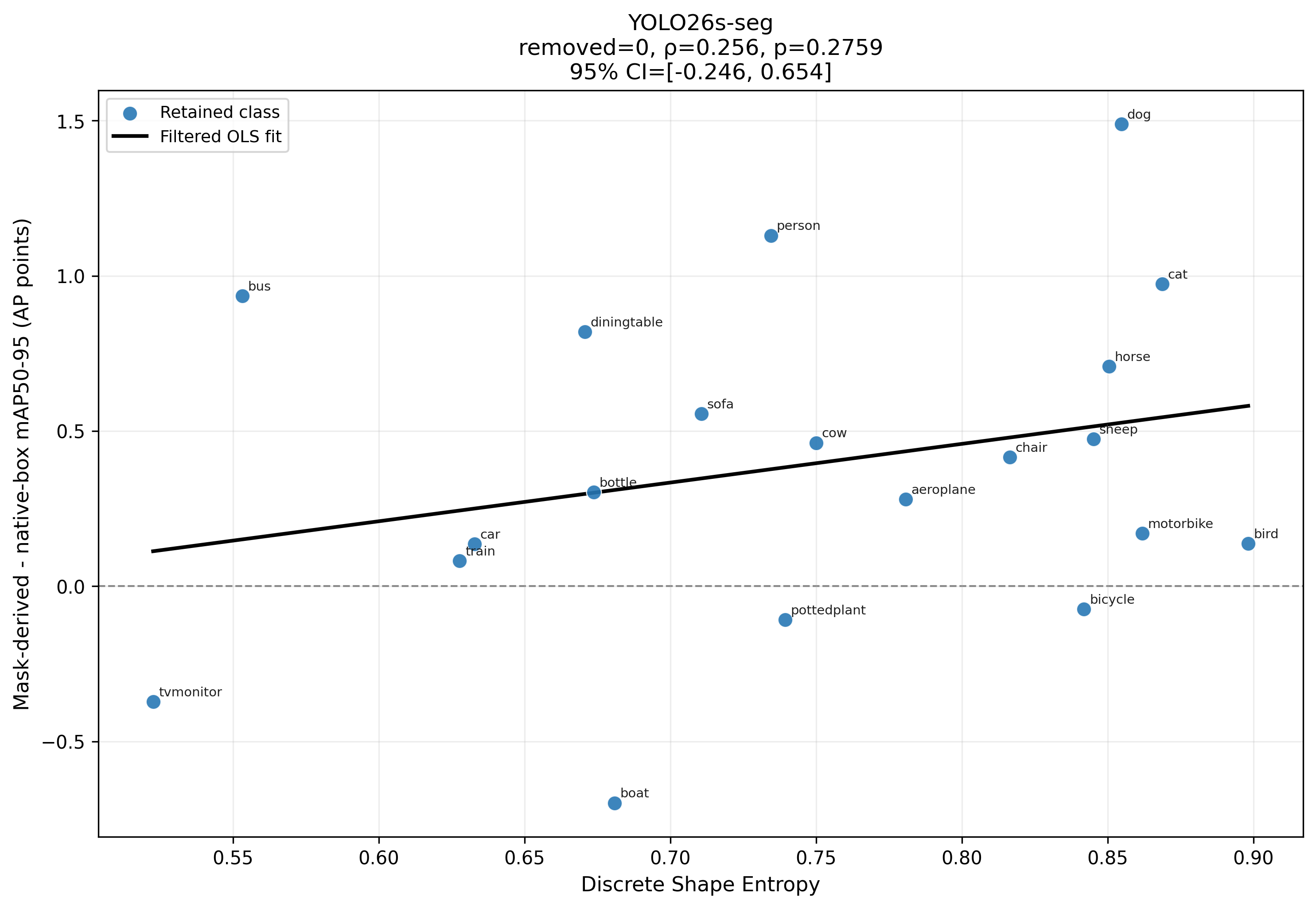}
    \caption{Correlation for YOLO26s}
\end{subfigure}
\hfill
\begin{subfigure}[t]{0.3\textwidth}
    \centering
    \includegraphics[width=\textwidth]{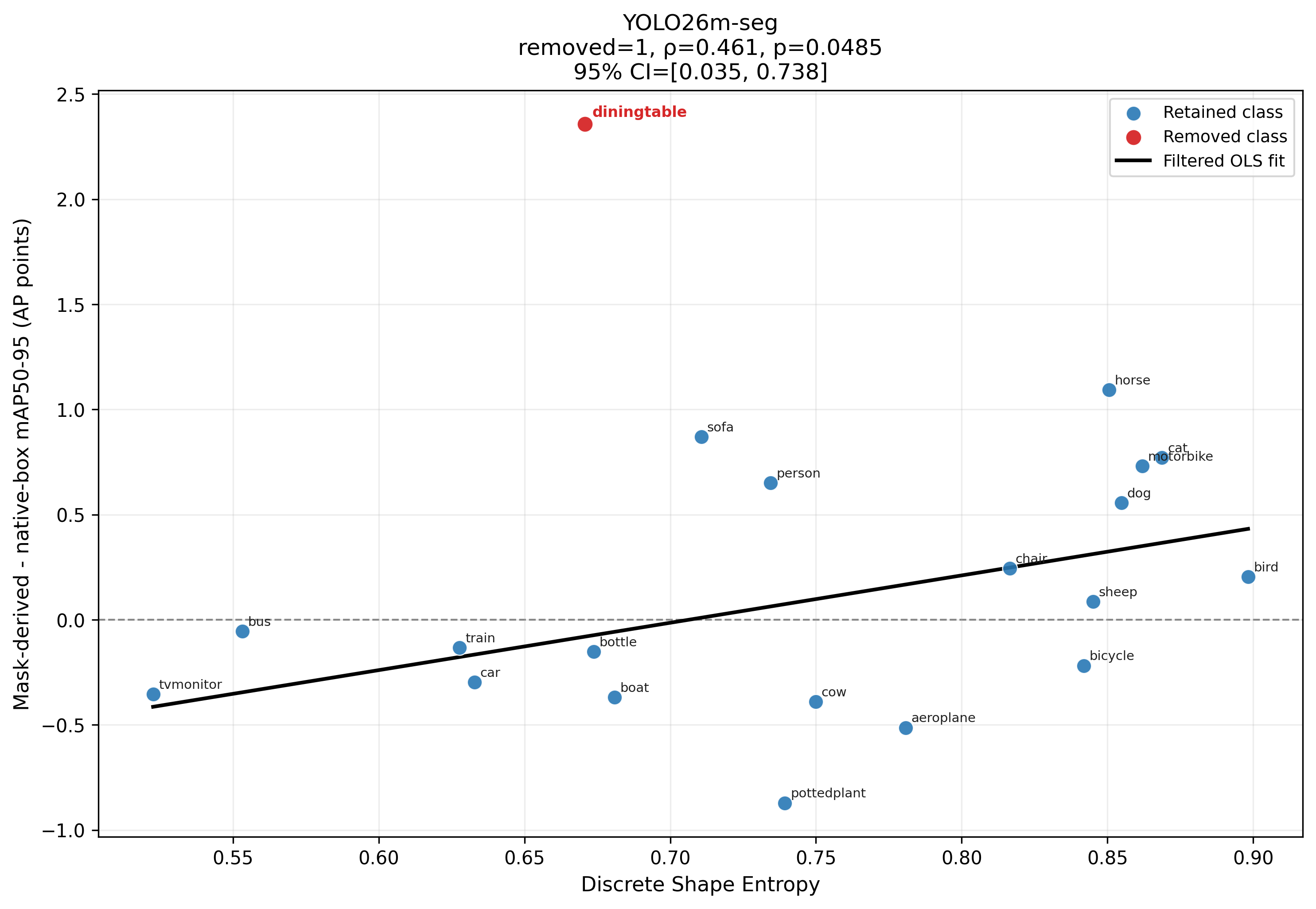}
    \caption{Correlation for YOLO26m}
\end{subfigure}
\end{figure}

\begin{figure}[h]
\ContinuedFloat
\begin{subfigure}[t]{0.3\textwidth}
    \centering
    \includegraphics[width=\textwidth]{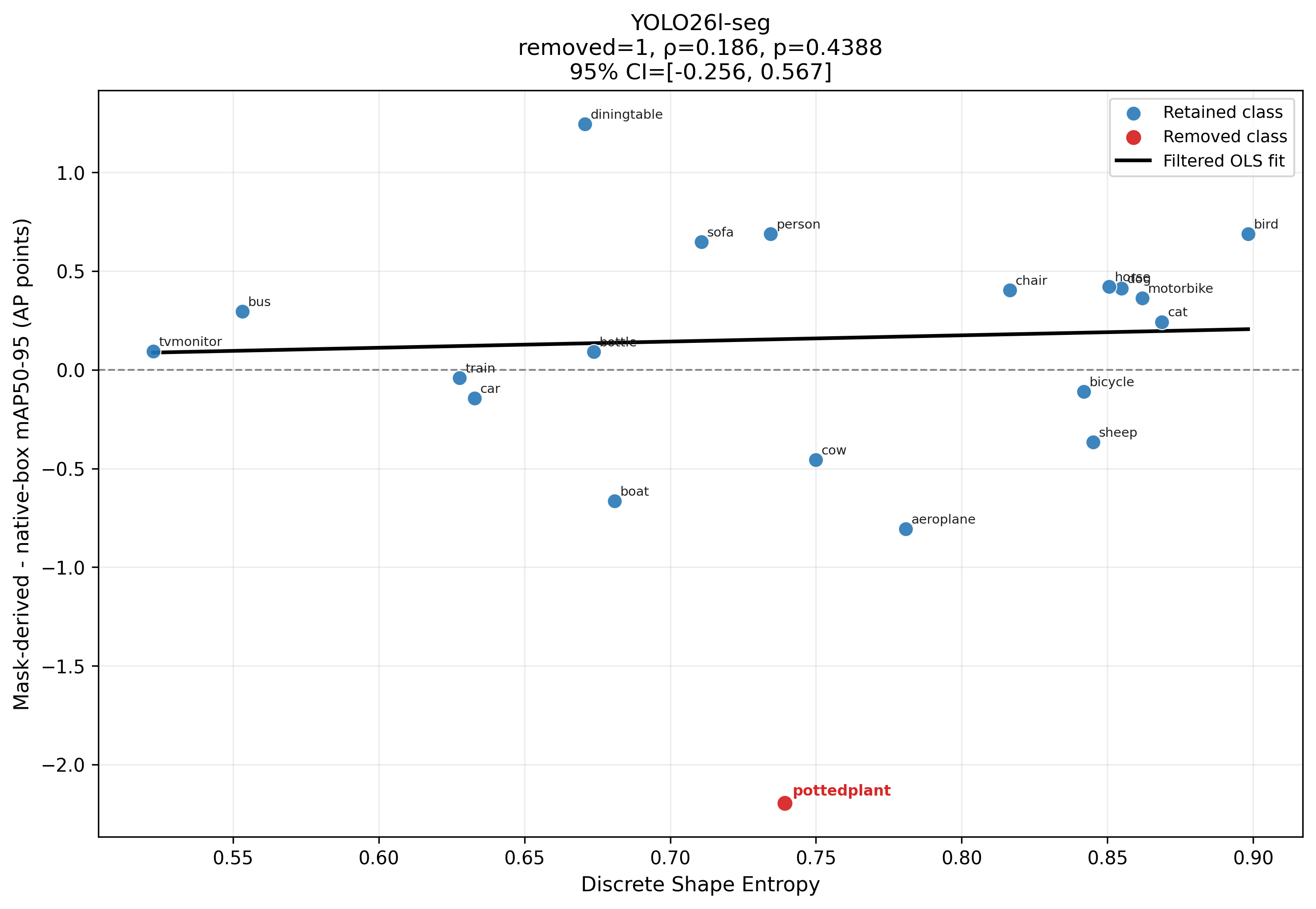}
    \caption{Correlation for YOLO26l}
\end{subfigure}
\hfill
\begin{subfigure}[t]{0.3\textwidth}
    \centering
    \includegraphics[width=\textwidth]{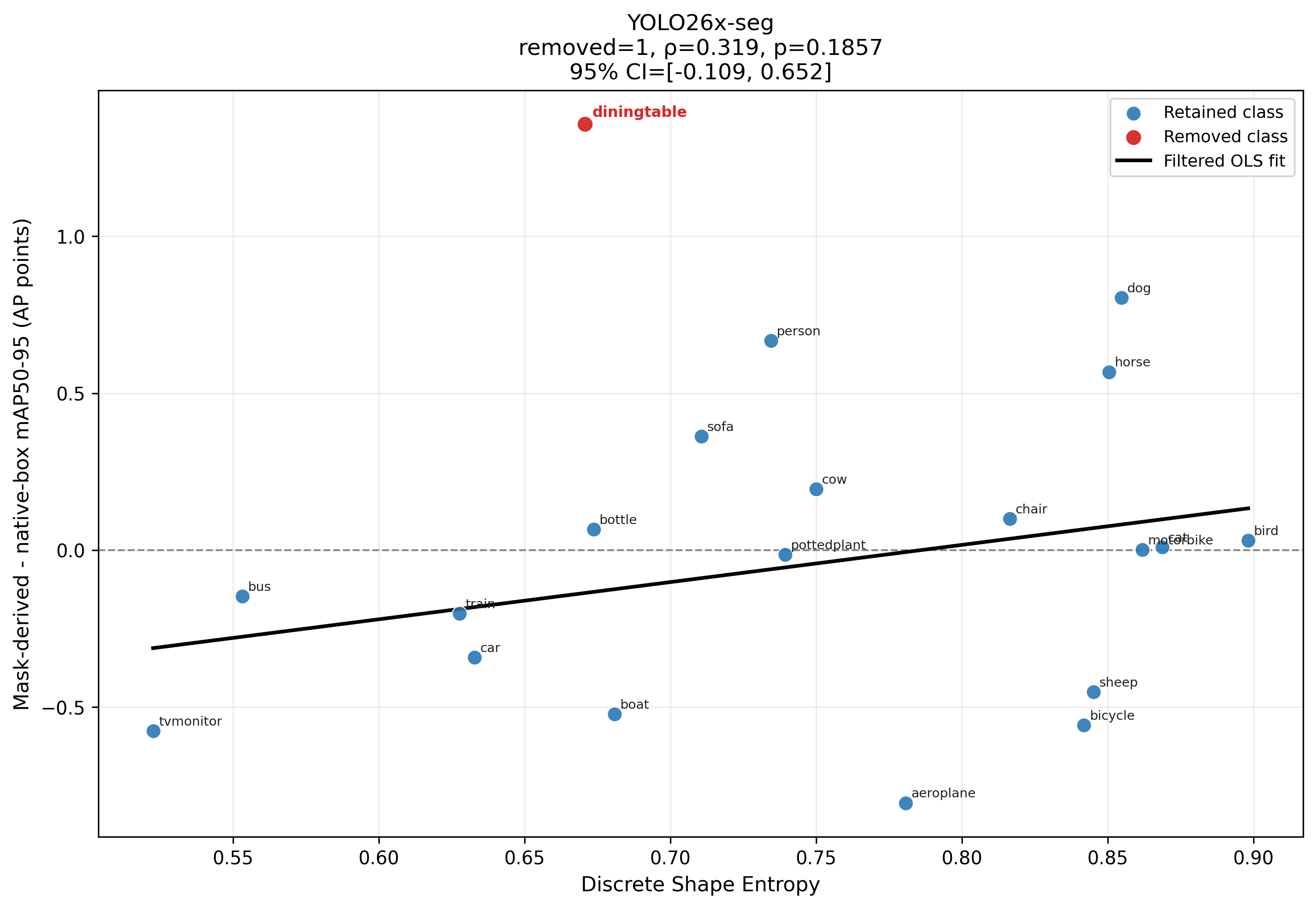}
    \caption{Correlation for YOLO26x}
\end{subfigure}
\hfill
\begin{subfigure}[t]{0.3\textwidth}
    \centering
    \includegraphics[width=\textwidth]{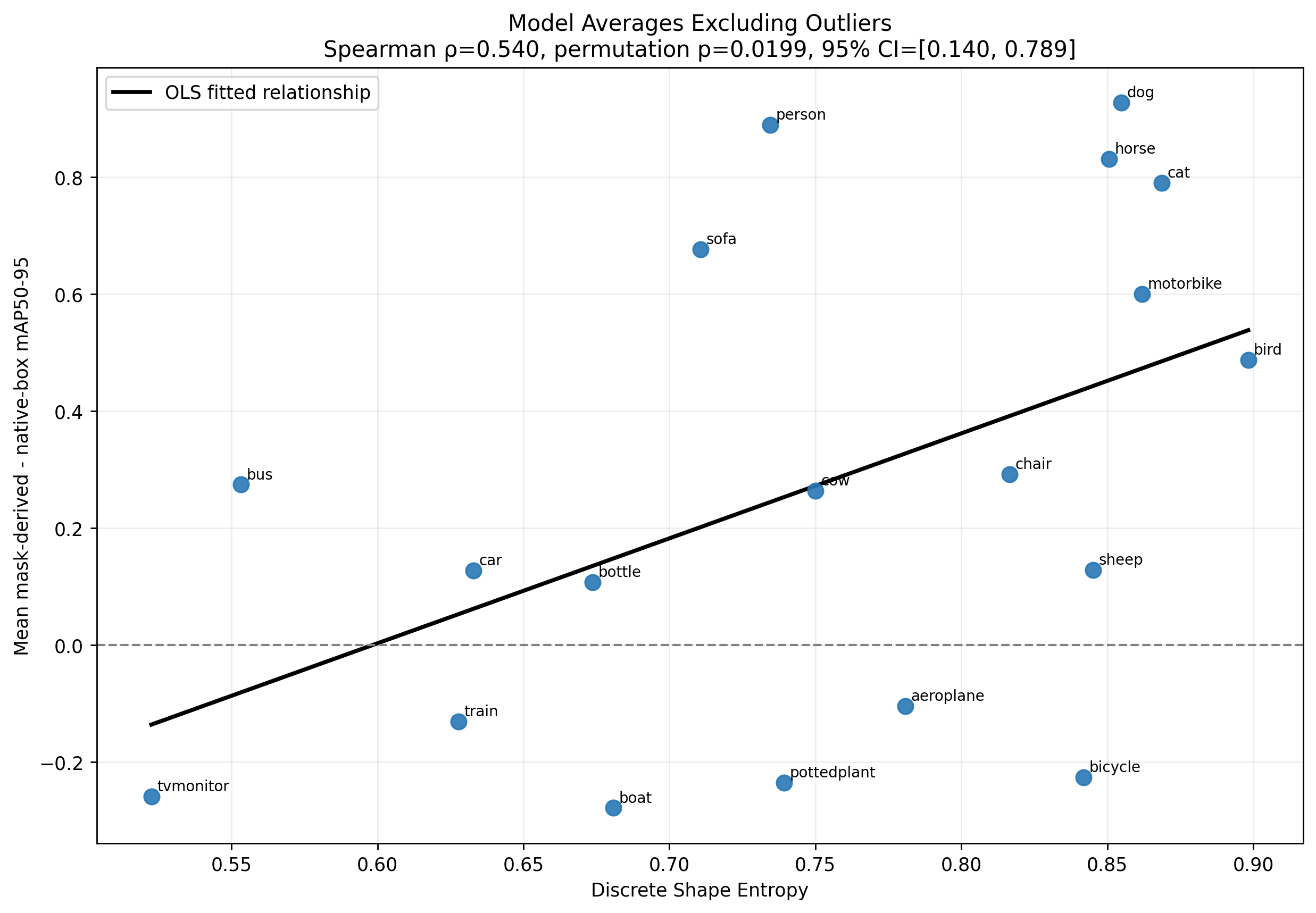}
        \caption{Model averages, no outliers}
\end{subfigure}
\end{figure}

\clearpage

\subsection{Full EAGLE Results}

Table \ref{table} reports full performance across all EAGLE variations, including various AP and computational cost metrics.

\begin{table}[h]
\caption{Full model performance with EAGLE and baseline performance.}
\label{table}
\begin{center}
\begin{tabular}{ll|lll|llll}
\toprule
Detector & Method & AP & $\text{mAP}_{50}$ & $\text{mAP}_{75}$ & Params (M) & Latency (ms)\\
\midrule
& Baseline & 40.25 & 55.80 & 43.65 & 2.4 & 2.3\\
\cmidrule{2-7}
& Enhancer & 40.89 & 56.77 & 44.17 & 2.7 & 2.3\\
\cmidrule{2-7}
& Entropy Penalty & 40.89 & 56.77 & 44.17 & 2.7 & 2.3\\
\cmidrule{2-7}
YOLO26n& Score Calibration & 41.06 & 57.07 & 44.20 & 2.7 & 2.3\\
\cmidrule{2-7}
& Geometric Bias & 40.89 & 56.76 & 44.19 & 2.7 & 2.4\\
\cmidrule{2-7}
& Duplicate Damping & 41.02 & 57.02 & \textbf{44.30} & 2.7 & 2.4\\
\cmidrule{2-7}
& Strength Adjustment & 40.89 & 56.77 & 44.17 & 2.7 & 2.4\\
\cmidrule{2-7}
& Bounded Combination & \textbf{41.11} & \textbf{57.19} & 44.25 & 2.7 & 2.3\\
\midrule
& Baseline & 52.49 & 69.69 & 57.25 & 20.4 & 6.4\\
\cmidrule{2-7}
& Enhancer & 52.76 & 70.07 & 57.50 & 20.7 & 6.4\\
\cmidrule{2-7}
& Entropy Penalty & 52.75 & 70.07 & 57.50 & 20.7 & 6.4\\
\cmidrule{2-7}
YOLO26m& Score Calibration & 52.88 & 70.37 & 57.49 & 20.7 & 6.5\\
\cmidrule{2-7}
& Geometric Bias & 52.75 & 70.06 & 57.49 & 20.7 & 6.4\\
\cmidrule{2-7}
& Duplicate Damping & 52.88 & 70.28 & \textbf{57.64} & 20.7 & 6.4\\
\cmidrule{2-7}
& Strength Adjustment & 52.76 & 70.07 & 57.50 & 20.7 & 6.5\\
\cmidrule{2-7}
& Bounded Combination & \textbf{52.94} & \textbf{70.48} & 57.56 & 20.7 & 6.4\\
\midrule
& Baseline & 48.10 & 65.07 & 52.09 & 20.0 & 6.8\\
\cmidrule{2-7}
& Enhancer & 48.16 & 65.17 & 52.17 & 20.3 & 6.9\\
\cmidrule{2-7}
& Entropy Penalty & 48.16 & 65.17 & 52.17 & 20.3 & 6.9\\
\cmidrule{2-7}
RT-DETR& Score Calibration & 48.16 & 65.17 & 52.17 & 20.3 & 6.9\\
\cmidrule{2-7}
& Geometric Bias & 48.18 & 65.16 & 52.19 & 20.3 & 6.8\\
\cmidrule{2-7}
& Duplicate Damping & 48.25 & \textbf{65.36} & 52.24 & 20.3 & 6.8\\
\cmidrule{2-7}
& Strength Adjustment & 48.17 & 65.17 & 52.17 & 20.3 & 6.9\\
\cmidrule{2-7}
& Bounded Combination & \textbf{48.26} & \textbf{65.36} & \textbf{52.26} & 20.3 & 6.9\\
\bottomrule
\end{tabular}
\end{center}
\end{table}

Models were tested across the full MS COCO 2017 validation dataset, consisting of 5000 images and 80 classes. The enhancer model represents the proposal and selector system, whereas the models with entropy penalty, score calibration, geometric bias, duplicate damping, strength adjustment, and bounded combination include the enhancer, selector, and postprocessing. All model variants except for the bounded combination do not also combine the other postprocessing steps: for example, score calibration does not also include entropy penalty.

\section{Full EAGLE Derivation}

Consider all polygon instances $\mathcal{P}$ taken from a partially supervised dataset. For each polygon instance $j$, resample to 24 equally spaced vertices, remove translation and scale, then align the resulting preshapes via generalized Procrustes alignment from this. At this step, all of the observed polygon instances are preshapes in the Kendall shape space.

For each class $c$, consider the subset of polygon instances spanning 10\% of total instances including non-supervised ones. Cluster the aligned contours based on the methodology applied in Section 4. Let $n_{c, k}$ be the count of instances of class $c$ and cluster $k$ and $n_c$ be the number of total instances of class $c$. Define the smoothed class distribution and normalized entropy as:
\begin{equation}
    p_{c, k} = \frac{n_{c, k} + \alpha}{\sum_j n_{c, j} + C \alpha}, \qquad H_c = - \frac{\sum_k p_{c, k} \log_2 p_{c, k}}{\log_2 C}, \qquad \alpha = 0.5
\end{equation}
where C denotes the number of shape clusters.

Create bins for aspect ratios based on Knuth's rule as shown in Section 4, where $r = \log \left( \frac{w}{h} \right)$ is derived based on the ground truth bounding boxes. Let $n_{c, b, k}$ represent the number of instances of class $c$, bin $b$, and cluster $k$. Let:
\begin{equation}
    p_{c, b, k} = \frac{n_{c, b, k} + \alpha + C p_{c, k}}{\sum_j n_{c, b, j} + C \alpha + C}, \qquad H_{c, b} = - \frac{\sum_k p_{c, b, k} \log_2 p_{c, b, k}}{\log_2 C}
\end{equation}

For a predicted class and box at inference, EAGLE defined prediction entropy as:
\begin{equation}
    h_i = 0.8 H_{c_i, b_i} + 0.2 H_{c_i}
\end{equation}
Now, let the frozen object detection model return feature maps $f_1, ..., f_L$ and score-ordered detections $(b_i, s_i, c_i)$. The geometric head processes the first 128 proposals. A proposal of width $w_i$ and height $h_i^{box}$ chooses feature level with $\sqrt{w_i h_i^{box}}$ with transitions at 96 and 224 pixels. A 9 by 9 feature map is generated via ROIAlign, and each level is projected across 48 channels by 1 by 1 and 3 by 3 convolutions via GroupNorm and SiLU.

For edge $e$ of the bounding box, let $f_{iek} \in \mathbb{R}^{48}$ represent feature $k$ for $k = 1, ..., 9$. We write:
\begin{equation}
    a_{iek} = \frac{\exp (g_{contact} (f_{iek}))}{\sum_{l = 1} \exp (g_{contact}(f_{iel}))}, \qquad u_{ie} = \sum_{k=1}^9 a_{iek} f_{iek}
\end{equation}
where $g_{contact}$ is the descriptor based on:
\begin{equation}
    a_{s, j} = \text{softmax}_j (W_2 \text{SiLU} (W_1 f_{s, j} + b_1) + b_2), \qquad g_{contact} = \text{concat}_{s=1}^4 \left( \sum_{j=1}^9 a_{s, j}f_{s,j} \right) \in \mathbb{R}^{192}
\end{equation}
Now, concatenate the four edge-derived $u_{ie}$ descriptors using the global ROI mean $v_i \in \mathbb{R}^{48}$ and the ratio-dependent joint value:
\begin{equation}
    g_i = \left[ \log \frac{w_i}{W}, \log \frac{h_i^{box}}{H}, \log \frac{w_i}{h_i^{box}}, \sqrt{\frac{w_i h_i^{box}}{W H}}, \text{logit}(s_i), \frac{l_i}{L - 1} \right]
\end{equation}
This provides a 246-dimensional input. To represent in 192 dimensions, it is modified via the entropy adapter:
\begin{equation}
    z_i = T_\theta ([u_{i1}, ..., u_{i4}, v_i, g_i]) + A_\phi ([h_i, h_i^2, h_i g_i])
\end{equation}
The final projection and the output are initialized to zero, therefore leaving the initial native boxes unchanged before learning geometric features.

For each box coordinate, predict a distribution over 21 support points $t_k \in [-1, 1]$, where:
\begin{equation}
    p_{iek} = \text{softmax}_k (W_e z_i), \qquad \delta_{ie} = \sum_{k=1}^{21} p_{iek} t_k
\end{equation}
where $W_e$ is taken with respect to a box edge. Let $D_i = (w_i, h_i^{box}, w_i, h_i^{box})$ be the original proposal that will be refined. Let the adapted box be:
\begin{equation}
    b_i^p = \text{clip} [b_i + 0.2 G (h_i) D_i \odot \delta_i], \qquad G(h) = 0.6 + 0.4 \sigma(4h -2)
\end{equation}

Each proposal is paired with its highest-IoU same-class ground truth box as determined via fitting on training data.

For supervised proposal $i$ with box $b_i^\star$, define target displacement as:
\begin{equation}
    \delta_i^\star = \text{clip}_{[-1, 1]} \frac{b_i^\star - b_i}{0.2 G (h_i) D_i}
\end{equation}
The loss linearly interpolates cross-entropy between the two support points, combining each component of $\delta_i^\star$. Localization value $1 - \text{CIoU} (b_i^p, b_i^\star)$ is combined with normalized coordinate Smooth L1 error. This penalizes the geometric head for not being able to improve native IoU. Loss is averaged with normalized weights:
\begin{equation}
    \hat{w}_i \propto \text{clip} \left[ \exp \left( 0.25 \frac{h_i - \hat{h}}{\max (sd(h), 0.05)} \right), 0.65, 1.35 \right]
\end{equation}
For selected polygon, we define the target distribution at each box side by sampling 9 points and determining squared distance $d^2_{iek}$ to the polygon boundary. This is done via:
\begin{equation}
    r_{iek} = \frac{\exp [-d^2_{iek} / (2 \sigma^2)]}{\sum_l \exp [-d^2_{iel} / (2 \sigma^2)]}, \qquad \sigma = 0.08
\end{equation}
Together, cross entropy with $r_{ie}$ and attention logits supplies $L_{contact}$. This applies only when analyzing predictions from the polygon-supervised instances.

During quality training, assign the ground-truth object to its best refined candidate. We may write the separate stages of training, where the first 25 epochs are geometric training, the next 20 are quality training, and the last 15 are joint, via the following loss functions:
\begin{equation}
    \mathcal{L}_{geom} = \mathcal{L}_{loc} + 0.45 \mathcal{L}_{dist} + 0.1 \mathcal{L}_{margin} + 0.1 \mathcal{L}_{contact}
\end{equation}
\begin{equation}
    \mathcal{L}_{quality} = 0.8 \mathcal{L}_{APquality} + 0.5 \mathcal{L}_{gain} + 0.2 \mathcal{L}_{rank}
\end{equation}
\begin{equation}
    \mathcal{L}_{joint} = 0.2 \mathcal{L}_{geom} + 0.5 \mathcal{L}_{quality}
\end{equation}
These methods are used to optimize the main geometric head, while the method for training the selector and fitting postprocessing are different.

The selector is trained on cached outputs from the detector and EAGLE enhancement, where there is a single detector pass across the entire training. The selector evaluates 64 detections per image, while the geometric head processes up to 128 proposals.

For detection $i$, let $b_i$ be the native box and $b_i^p$ be the geometric proposal, as well as let $\Delta_i$ be the enhancement's predicted gain. The existing soft refinement:
\begin{equation}
    b_i^{(0)} = b_i + 0.75 \sigma \left(\frac{\Delta_i}{0.015}\right)(b_i^p - b_i)
\end{equation}
is the default soft refinement. The alternatives being selected between are the native box, and refinements at 0.5, 0.75, 1 and 1.25 times the proposed displacement of each edge. The selector learns how to adjust the geometric refinement which is proposed with entropy-aware changes to the underlying predictions, therefore choosing between proposals to adapt the native object detection and geometric heads together to achieve higher accuracy. For each alternative, training replaces the detection's box and rematches the affected class's detections against ground truth at 10 IoU thresholds. The resulting target is a first-order AP estimation, from which the highest AP value is selected between the proposals.

The network projects the geometric head's 192-dimensional representation alongside 50 contextual features separately to 64 dimensions and then appends an eight-dimensional class embedding, applying a 136 to 96 to 64 MLP. The output is $z_i$ and $h_i$ is its shape entropy. We utilize $\hat{z}_i = z_i \odot [ 1 + 0.1 \tanh M(h_i, h_i^2) ]$ to include entropy as additional information from which the MLP is able to better interpret entropy values alongside the proposals themselves. Together, the selector uses three branches to predict utility and likelihood of the change improving detection AP across the alternatives, with localization quality tested at 10 IoU thresholds across the possible adaptations to the box that can be made. Training minimizes the joint loss function derived as:
\begin{equation}
    \mathcal{L}_{selection} = \mathcal{L}_{utility} + 0.15 \mathcal{L}_{harm} + 0.5 \mathcal{L}_{quality} + 0.15 \mathcal{L}_{action}
\end{equation}
where each uses weighted Smooth L1 regression, cross-entropy, masked quality binary cross-entropy, and ranking cross-entropy, respectively.

During inference, the selector maximizes the predicted utility and includes a penalty for predicted harm of the refinement. If $\hat{q}_i$ is the mean predicted quality, then rescoring provides:
\begin{equation}
    s'_i = \sigma [ \text{logit} (s_i) + \beta \text{clip} (\text{logit}(\hat{q}_i) - \text{logit}(s_i), -1.5, 1.5 ) ]
\end{equation}
where the hyperparameter $\beta$ determines how much the predicted quality is able to rescore the refinement selection.

For postprocessing, again consider $b_i^\star$ as the EAGLE enhanced box and $s_i'$ as the score, as well as $b_i$ as the native box, $h_i$ as the native box entropy, and $x_i$ as the calibration features. The postprocessing operations are performed as follows:
\begin{enumerate}
    \item Entropy Penalty: $s_i^E = s_i' \exp( - \lambda_E h_i)$.
    \item Score Calibration: $s_i^C = \sigma (\text{logit} (s_i') + \lambda_C \text{clip} (x_i^T \theta, -1.5, 1.5))$.
    \item Geometric Bias: For box center of $c_i$ along with size $w_i$, let $d_i = x_i^T W$ where $W$ is fitted to box edges. Then, we have $c_i^B = c_i + \lambda_B w_i \odot \text{clip} (d_{i,1:2}, -0.02, 0.02)$ and $w_i^B = w_i \odot \exp [\lambda_B \text{clip} (d_{i,3:4}, -0.04, 0.04)]$.
    \item Duplicate Damping: Let $u_i$ be the greatest IoU with an earlier and higher scoring same-class detection, then $s_i^D = s_i' \exp \{- \lambda_D \left[\frac{(u_i - t)_+}{1 - t}\right]^2 (0.5 + 0.5 h_i)\}$.
    \item Refinement Strength: $b_i^S = b_i + \text{clip} [1 + \lambda_S (h_i - 0.5), 0, 1.5] (b_i^\star - b_i)$.
\end{enumerate}
The bounded combination tests the combinations of either entropy penalty or score calibration followed by one of geometric bias, duplicate damping, or refinement-strength adjustment, then chooses the sequence with the highest MS COCO 2017 training AP and uses that combination for validation. The coefficients used were $\lambda_E = 0.7, \lambda_C = 1, \lambda_B = 1, \lambda_D = 0.75, \lambda_S = 1$, along with an alternative test of $\lambda_S = -1$, of which the higher performing step is used for validation.

\end{document}